\newtheorem{theorem}{Theorem}
\newtheorem{lemma}{Lemma}
\newtheorem{definition}{Definition}
\title{SUMO: Search-Based Uncertainty Estimation for Model-Based Offline Reinforcement Learning}
\author{
    Zhongjian Qiao\textsuperscript{\rm 1},
    Jiafei Lyu\textsuperscript{\rm 1},
    Kechen Jiao\textsuperscript{\rm 1},
    Qi Liu \textsuperscript{\rm 2},
    Xiu Li \textsuperscript{\rm 1}
}
\begin{document}

\maketitle

\begin{abstract}
The performance of offline reinforcement learning (RL) suffers from the limited size and quality of static datasets. Model-based offline RL addresses this issue by generating synthetic samples through a dynamics model to enhance overall performance. To evaluate the reliability of the generated samples, uncertainty estimation methods are often employed. However, model ensemble, the most commonly used uncertainty estimation method, is not always the best choice. In this paper, we propose a \textbf{S}earch-based \textbf{U}ncertainty estimation method for \textbf{M}odel-based \textbf{O}ffline RL (SUMO) as an alternative. SUMO characterizes the uncertainty of synthetic samples by measuring their cross entropy against the in-distribution dataset samples, and uses an efficient search-based method for implementation. In this way, SUMO can achieve trustworthy uncertainty estimation. We integrate SUMO into several model-based offline RL algorithms including MOPO and Adapted MOReL (AMOReL), and provide theoretical analysis for them. Extensive experimental results on D4RL datasets demonstrate that SUMO can provide more accurate uncertainty estimation and boost the performance of base algorithms. These indicate that SUMO could be a better uncertainty estimator for model-based offline RL when used in either reward penalty or trajectory truncation. Our code is available and will be open-source for further research and development.
\end{abstract}

%

\section{Introduction}

Offline reinforcement learning (RL) \citep{levine2020offline,prudencio2023survey} aims to learn the optimal policy from a static dataset collected in advance, avoiding the risks and costs associated with environmental interaction in typical RL \citep{sutton2018reinforcement}. 
Nevertheless, since the datasets cannot cover the entire state-action space, the offline agent cannot accurately estimate the Q-value for out-of-distribution (OOD) samples, ultimately leading to a degradation in the agent's performance. 

\begin{figure}[t]
    \centering
    \includegraphics[width=0.95\linewidth]{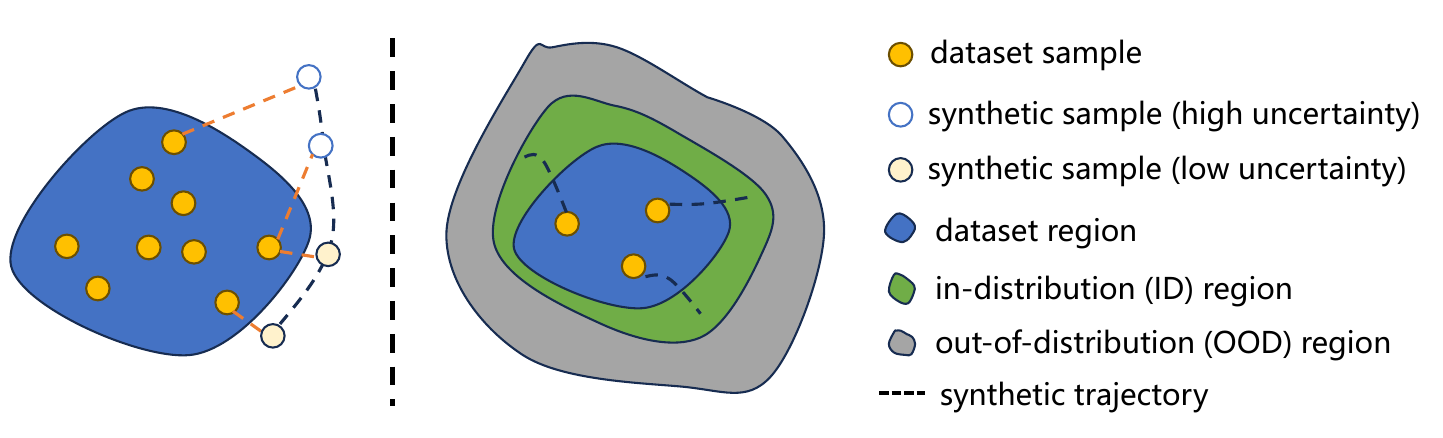}
    \caption{\textbf{Left:} The key idea of SUMO. For a synthetic sample, we calculate its KNN distance within the dataset as the uncertainty estimation. \textbf{Right:} An example of combining SUMO with AMOReL for trajectory truncation. Thanks to the accurate uncertainty estimation provided by SUMO, the agent can explore ID regions while avoiding OOD regions.}
    \label{fig:1}
\end{figure}

Model-based offline RL \citep{yu2020mopo,yu2021combo,kidambi2020morel} employs a promising idea to address OOD issues. By leveraging an environmental dynamics model obtained by supervised learning, the agent can collect samples within the dynamics model and train the policy using these samples. This significantly enhances the performance and generalization of the agent. However, synthetic samples generated by the dynamics model may not be reliable \cite{lyu2022doublecheck}, as the dynamics model's reliability in OOD regions is not guaranteed. Training the policy with unreliable samples can lead to a performance decline. Therefore, evaluating the reliability of generated samples is critical. It is common to assess the reliability with \textit{uncertainty estimation}~\citep{lockwood2022review,an2021uncertainty}. Current model-based offline RL methods often employ model ensemble-based techniques for uncertainty estimation, such as max aleatoric \citep{yu2020mopo} and max pairwise diff \citep{kidambi2020morel}. Subsequently, the estimated uncertainty can be used for trajectory truncation \citep{kidambi2020morel,zhang2023uncertainty} or reward penalties \citep{yu2020mopo} to mitigate OOD issues. However, model ensemble-based uncertainty estimation methods may be unreliable~\citep{yu2021combo} because the models can be poorly learned, making their uncertainty estimation questionable. We wonder: \textit{Can we design a better uncertainty estimation method for model-based offline RL?}
In this paper, we propose a \textbf{S}earch-based \textbf{U}ncertainty estimation method for \textbf{M}odel-based \textbf{O}ffline RL (SUMO). SUMO characterizes the uncertainty of synthetic samples as the cross entropy between model dynamics and true dynamics, which 
can be shown as 
a more reasonable uncertainty estimation than model ensemble-based estimation. Moreover, the estimated uncertainty for a given sample does not involve extra training of neural networks. In contrast, the uncertainty estimated by model ensemble methods is correlated with the training process and training data distribution since they need to train dynamics models parameterized by neural networks. To estimate the cross entropy practically, we employ a particle-based entropy estimator \citep{singh2003nearest}, transforming the problem into a $k$-nearest neighbor (KNN) search problem, as shown in Figure~\ref{fig:1} (left), that is the reason we call SUMO a search-based method. Furthermore, given the large search space and high data dimensionality, we employ FAISS \cite{johnson2019billion} to ensure efficient KNN search. We note that SUMO is algorithm-agnostic, allowing us to integrate it with any model-based offline RL algorithm which needs uncertainty estimation. For instance, it can be combined with MOPO~\citep{yu2020mopo} or with Adapted MOReL~\citep{kidambi2020morel} (AMOReL, which we discuss in later sections), as shown in Figure~\ref{fig:1} (right).

We integrate SUMO with several off-the-shelf model-based offline RL algorithms like MOPO and AMOReL and theoretically analyze their performance bounds after introducing SUMO. Empirically, we conduct extensive experiments on the D4RL \citep{fu2020d4rl} benchmark, and the experimental results indicate that SUMO can significantly enhance the performance of base algorithms. We also show that SUMO can provide more accurate uncertainty estimation than commonly used model ensemble-based methods. Our contributions can be summarized as follows:

\begin{itemize}
    \item We propose a novel search-based uncertainty estimation method for model-based offline RL, SUMO.
    \item We combine SUMO with AMOReL and MOPO, and provide theoretical performance bounds.
    \item We empirically demonstrate that SUMO incurs accurate uncertainty estimation and can bring significant performance improvement over the base algorithms on numerous D4RL datasets.
\end{itemize}

\section{Background}
\label{sec:background}
\noindent \textbf{Reinforcement Learning (RL).} We consider a Markov Decision Process (MDP)~\citep{garcia2013markov} modeled by $\mathcal{M}=\langle S,A,r,P,\rho,\gamma \rangle$, where $S$ is the state space, $A$ is the action space, $r$ is the reward function: $S\times A\rightarrow \mathbb{R}$, $P$ is the transition dynamics: $S\times A\times S\rightarrow [0,1]$, $\rho$ is the initial state distribution, and $\gamma\in[0,1)$ is the discount factor. RL aims to get a policy $\pi_\theta(a|s)$ that maximizes the cumulative expected discounted return: $J_\rho(\pi,\mathcal{M})=\max_{\pi_\theta}\mathbb{E}_{\pi_\theta}\left[\sum_{t=0}^\infty \gamma^t r(s_t, a_t) | s_0\sim\rho\right]$. We specify $\mathcal{M}$ to stress that $J_\rho(\pi,\mathcal{M})$ is obtained in the MDP $\mathcal{M}$.


\noindent \textbf{Offline RL.} In offline RL, the agent aims to learn the optimal batch-constraint policy based on a static dataset $\mathcal{D}=\{(s_i,a_i,r_i,s_{i+1})\}_{i=1}^N$, where the samples are collected by a (unknown) behavior policy $\mu$, without accessing the environment. Since the dataset can not cover the entire state-action space, the performance of offline RL is often limited.

\noindent \textbf{Model-based Offline RL.}  Model-based offline RL leverages the learned dynamics model $P_{\widehat{\mathcal{M}}}(\cdot|s,a)$ to generate synthetic samples. This allows us to train the policy using both dataset transitions and samples generated by the dynamics model. We denote the model MDP as $\widehat{\mathcal{M}}$, the state distribution probability at timestep $t$ following policy $\pi$ and dynamics $P_{\widehat{\mathcal{M}}}$ as $\mathbb{P}_{\widehat{\mathcal{M}},t}^\pi(s)$, and discounted occupancy measure of $\pi$ under dynamics $P_{\widehat{\mathcal{M}}}$ as $\hat{\rho}^\pi(s,a) := \pi(a|s)\sum_{t=0}^\infty \gamma^t \mathbb{P}_{\widehat{\mathcal{M}},t}^\pi(s)$.

\section{Related Work}
\noindent \textbf{Model-free Offline RL.} Model-free offline RL aims to train an agent merely on a static dataset without a dynamics model. The major challenge for model-free offline RL is extrapolation error~\citep{fujimoto2019off,kumar2020conservative,lyu2022mildly} of the Q-function due to the inability to explore. Existing methods introduce conservatism to the policy~\citep{fujimoto2021minimalist,kumar2019stabilizing,fujimoto2019off} or Q-function~\citep{kumar2020conservative,lyu2022mildly,an2021uncertainty,Yang2024ExplorationAA} to tackle the issue. However, the involvement of conservatism restricts the generalization ability of the policy.

\noindent \textbf{Model-based Offline RL.} Model-based offline RL leverages a dynamics model to extend the dataset and enhance the generalization ability. Since the dynamics model may not be accurate on all transitions, conservatism is still necessary for learning a good policy. Some works~\citep{yu2020mopo, kidambi2020morel} incorporate conservatism into the generated samples, while other works~\citep{yu2021combo,rigter2022rambo,liu2023domain} introduce conservatism to Q-function.

 
\noindent \textbf{Uncertainty Estimation in offline RL.} 
In model-free offline RL, uncertainty measures the distance of samples from the dataset distribution. EDAC~\citep{an2021uncertainty} leverages an ensemble of Q-networks to estimate the 
uncertainty.
DARL~\citep{zhang2023darl} computes the KNN distances between the samples and the dataset as uncertainty estimation. In model-based offline RL, uncertainty measures the discrepancy between the simulated dynamics and the true dynamics. MOPO~\citep{yu2020mopo} and MOReL~\citep{kidambi2020morel} estimate the uncertainty by model ensemble. MOBILE~\citep{sun2023model} quantifies the uncertainty through model-bellman inconsistency. All of these uncertainty estimation methods rely on model ensemble, while our method does not. Some other uncertainty estimation methods \citep{kim2023model,tennenholtz2022uncertainty} suffer from heavy computational burden and complex implementation, while our method is both efficient and easy to implement. 

\section{Methodology}

\subsection{Search-based method for Uncertainty Estimation}
In this part, we formally present our search-based uncertainty estimation method for model-based offline RL. 
Firstly, we outline how we model the uncertainty estimation problem as an estimation of cross entropy. Subsequently, we employ a particle-based entropy estimator, which approximates the calculation of cross entropy as a search problem. 

We characterize the uncertainty of the model dynamics as its cross entropy against the true dynamics: $\mathcal{H}(P_{\widehat{\mathcal{M}}}(\cdot|s,a),P(\cdot|s,a))=-\sum P_{\widehat{\mathcal{M}}}(\cdot|s,a) \log P(\cdot|s,a)$, where $P_{\widehat{\mathcal{M}}}(\cdot|s,a)$ and $P(\cdot|s,a)$ are the model dynamics and the true dynamics, respectively. It is hard to calculate $\mathcal{H}(P_{\widehat{\mathcal{M}}}(\cdot|s,a),P(\cdot|s,a))$ since we have no knowledge about the true dynamics $P(\cdot|s,a)$. We therefore compute the \emph{cross entropy} between the model dynamics and the dataset dynamics: $\mathcal{H}(P_{\widehat{\mathcal{M}}}(\cdot|s,a),P_d(\cdot|s,a))=-\sum P_{\widehat{\mathcal{M}}}(\cdot|s,a) \log P_d(\cdot|s,a)$, where $P_d(\cdot|s,a)$ denotes the dataset transition dynamics of dataset MDP, which is formally defined below.

\begin{definition}[dataset MDP]
    \label{def:1}
    The dataset MDP is defined by the tuple $\hat{\mathcal{M}_d}=(S,A,r,P_d,\rho_d,\gamma)$, where $S$, $A$, $r$ and $\gamma$ are the same as the original MDP. The dataset transition dynamics $P_d$ is defined as:
    \begin{equation}
        P_d(\cdot|s,a) = \begin{cases} P(\cdot|s,a), \qquad \mathrm{if} \, (s,a)\in \mathcal{D}, \\
    0,\qquad \qquad \qquad\mathrm{otherwise,}  \end{cases}
    \end{equation}
and $\rho_d$ is the state distribution of the dataset.
\end{definition}

\noindent \textbf{Remark:} Intuitively, $P_d$ only accounts for transitions that lie in the dataset; we set the probability for transitions not in the dataset to be zero. Note that the summation of $P_d$ for a given state-action pair $(s,a)$ may be not 1, we can scale $P_d$ accordingly to make it a valid probability density function.

Then based on the definition of cross entropy, we have $\mathcal{H}(P_{\widehat{\mathcal{M}}},P_d)= -\frac{1}{n}\sum_{i=1}^n \log P_d(x_i)$, where $\{x_i\}_{i=1}^n\sim P_{\widehat{\mathcal{M}}}$. According to the particle-based entropy estimator \citep{singh2003nearest}, given a dataset $\{y_j\}_{j=1}^m\sim P_d$, we have $P_d(x_i)= \frac{k\Gamma(d/2+1)}{m\pi^{d/2}R_{i,k,m}^d}$, where $d$ is the dimension of $x_i$, $\Gamma$ is the Gamma function, and $R_{i,k,m}^d = \|x_i-x_i^{k,m}\|_2^d$ represents the distance between $x_i$ and the $k$-nearest neighbor of $x_i$ in $\{y_j\}_{j=1}^m$. Then, we can compute the cross entropy as:
\begin{equation}
    \label{eq1}
    \begin{split}
        \mathcal{H}(P_{\widehat{\mathcal{M}}}, P_d) &= -\frac{1}{n}\sum_{i=1}^n \log \frac{k\Gamma(d/2+1)}{m\pi^{d/2}R_{i,k,m}^d} \\
        &\propto \frac{1}{n}\sum_{i=1}^n \log \|x_i-x_i^{k,m} \|_2
    \end{split}
\end{equation}

Notice that Equation (\ref{eq1}) is an uncertainty estimation of the model dynamics. To estimate the uncertainty of a sample $x_i$ generated by the dynamics model, we can assume $P_{\widehat{\mathcal{M}}}$ is a delta distribution, where $P_{\widehat{\mathcal{M}}}(x_i)=\infty$, $P_{\widehat{\mathcal{M}}}(x)=0$ for any $x\neq x_i$. Thus, we transform the problem of estimating the uncertainty of synthetic samples into a KNN search problem. To be specific, given an offline dataset $\mathcal{D}=\{(s_i,a_i,r_i,s_{i+1})\}_{i=1}^N$ and a synthetic sample $(\hat{s},\hat{a},\hat{r},\hat{s}^\prime)$, we can estimate the uncertainty of the synthetic sample as:
\begin{equation}
\label{eq:1}
    u(\hat{s},\hat{a},\hat{s}^\prime) = \log\left( \|(\hat{s}\oplus \hat{a} \oplus \hat{s}^\prime) - (\hat{s}\oplus \hat{a} \oplus \hat{s}^\prime)^{k,N}\|_2\right)
\end{equation}
where $\oplus$ is the vector concatenation operator, $(\hat{s}\oplus \hat{a} \oplus \hat{s}^\prime)^{k,N}$ returns the nearest neighbor of the query sample $(\hat{s},\hat{a},\hat{r},\hat{s}^\prime)$ in the offline dataset, $k$ is the number of neighbors for KNN search, and $N$ is the number of samples in the dataset. To ensure the estimated uncertainty is non-negative, we practically compute the uncertainty as:
\begin{equation}
\label{eq:2}
    u(\hat{s},\hat{a},\hat{s}^\prime) = \log\left( \|(\hat{s}\oplus \hat{a} \oplus \hat{s}^\prime) - (\hat{s}\oplus \hat{a} \oplus \hat{s}^\prime)^{k,N}\|_2 +1\right)
\end{equation}
Note that one can also involve the rewards in Equation (\ref{eq:2}) for computing the uncertainty, which we empirically do not find obvious performance difference in Appendix D.3. To ensure time efficiency, we employ FAISS \citep{johnson2019billion}, an efficient GPU-based KNN search method to estimate the uncertainty. Note that using KNN search to calculate distances between given samples and datasets is a widely used approach in model-free offline RL \citep{zhang2023darl,ran2023policy,lyu2024seaboasimple}. However, they often rely on constraining the distance between the learned policy and the behavior policy, and the concatenated vector only includes dimensions of $s$ and $a$. In contrast, for model-based offline RL, what matters is the discrepancy between the model dynamics and the true dynamics. Therefore, we need to consider the influence of $s^\prime$. 

We then discuss the advantage of SUMO compared with model ensemble-based methods. First, SUMO estimates the uncertainty in an unsupervised learning manner, meaning that the estimated uncertainty is stable and not correlated with the training process. Second, model ensemble-based uncertainty estimation is an empirical approximation. A mismatch can be observed in theoretical analysis of prior works, e.g., $D_{TV}(P_{\widehat{\mathcal{M}}}(\cdot|s,a),P(\cdot|s,a))\leq\alpha$ is required in MOReL, where $D_{TV}$ is the total variation distance and $\alpha$ is the threshold. Model ensemble-based uncertainty estimation can only practically approximate the bound of $D_{TV}$ with $u$. In contrast, SUMO provides an unbiased estimate of cross entropy, relationships between $u$ and $D_{TV}$ can be established using Pinsker's inequality \citep{csiszar2011information}, as shown in Appendix A.2.

\subsection{Integrating SUMO into Existing Methods}

As an uncertainty estimation method, 
SUMO is compatible with 
any model-based offline RL algorithm that requires an estimation of uncertainty. There are two typical ways of using uncertainty in model-based offline RL, using the uncertainty as the reward penalty, e.g., MOPO \cite{yu2020mopo}, and using the uncertainty to truncate synthetic trajectories from the learned model, e.g., MOReL \cite{kidambi2020morel}. In this part, we introduce the procedure of combining SUMO with MOPO and Adapted MOReL, respectively.
\subsubsection{MOPO with SUMO}
MOPO adopts the uncertainty as the reward penalty, and we can simply replace the uncertainty estimator as the SUMO estimator. The detailed pseudocode of MOPO+SUMO can be found in Appendix B, and the detailed process can summarized as:

\noindent \textbf{Step1: Learning the Environmental Dynamics Model:} We first construct an environmental dynamics model to simulate the true dynamics. Following previous works~\citep{yu2020mopo,kidambi2020morel}, we model the dynamics model as a multi-layer neural network which outputs a Gaussian distribution over the next state and reward: $\hat{P}_\psi(s^\prime,r|s,a)=\mathcal{N}\left(\mu_{\psi}(s,a),\Sigma_{\psi}(s,a)\right)$. We use an ensemble dynamics model which consists of $N$ ensemble members: $\hat{P}_{\psi}=\{\hat{P}_{\psi_i}\}_{i=1}^N$. Note that we use the ensemble dynamics model solely to reduce the error in model predictions, not for estimating the uncertainty. Then we can train each ensemble member via maximum log-likelihood:
\begin{equation}
\label{eq:3}
    \mathcal{L}_{\psi_i}=\mathbb{E}_{(s,a,r,s^\prime)\sim \mathcal{D}}\left[-\log \hat{P}_{\psi_i}(s^\prime,r|s,a)\right]
\end{equation}

\noindent \textbf{Step2: Reward Penalty Calculation with SUMO:} We utilize the current policy $\pi$ and the dynamics model $\hat{P}_{\psi}$ to generate synthetic samples. For each generated sample $(s,a,s^\prime)$, we first calculate its unnormalized uncertainty with SUMO via Equation (\ref{eq:2}). We assume that the reward function satisfies: $0\leq r(s,a)\leq r_{max},\forall (s,a)$. This is easy to achieve in practice. Then we normalize $u(s,a,s^\prime)$ to the range $[0,r_{max}]$:
\begin{equation}
\label{eq:5}
    \hat{u}(s,a,s^\prime) = \frac{u(s,a,s^\prime)}{\max_{(s,a,s^\prime)} u(s,a,s^\prime)} \times r_{max}
\end{equation}
We adopt $\hat{u}$ to penalize the rewards, $\hat{r}=r-\lambda\hat{u}$, where $\lambda$ is a hyperparameter that controls the magnitude of the penalty. We can then add the synthetic samples with reward penalty to the synthetic dataset $\mathcal{D}_{model}$.

\noindent \textbf{Step 3: Model-based Policy Optimization:} We then follow MOPO to train SAC~\cite{haarnoja2018soft} agent with samples drawn from the offline dataset $\mathcal{D}$ and synthetic dataset $\mathcal{D}_{model}$. Typically, we set a sampling coefficient $\eta\in[0,1]$, sampling a proportion of $\eta\mathcal{B}$ from the offline dataset $\mathcal{D}$, and a proportion of $(1-\eta)\mathcal{B}$ from the synthetic dataset $\mathcal{D}_{model}$ given the batch size $\mathcal{B}$. 

\subsubsection{Adapted MOReL with SUMO}
\label{sec:1}
MOReL calculates model ensemble discrepancy as an uncertainty estimation to determine whether a state-action pair $(s,a)$ is within a known region and penalizes samples from unknown regions. We can simply replace the uncertainty estimation method with SUMO to determine whether a transition $(s,a,s^\prime)$ is reliable. In our practical implementation, we make slight modifications to MOReL. Specifically, when encountering unreliable samples, we directly truncate the trajectory instead of applying a constant penalty. This ensures the reliability of the samples used for training. And we update the policy via SAC~\cite{haarnoja2018soft} instead of planning. We refer to this modified version as Adapted MOReL (AMOReL), and divide the process of SUMO with AMOReL into three steps:


\noindent \textbf{Step 1: Constructing the Environmental Dynamics Model:}
Following the same procedure of MOPO+SUMO, we train an ensemble dynamics model $\hat{P}_{\psi}=\{\hat{P}_{\psi_i}\}_{i=1}^N$.

\noindent \textbf{Step 2: Trajectory Truncation with SUMO:} We then utilize the current policy $\pi$ and dynamics model $\hat{P}_\psi$ to generate synthetic samples. We use SUMO to measure the uncertainty of the samples and truncate the synthetic trajectory accordingly. In specific, in the process of generating synthetic trajectories, we apply Equation (\ref{eq:2}) to estimate the uncertainty of each generated sample $(\hat{s},\hat{a},\hat{s}^\prime)$ and set a truncating threshold $\epsilon$. If the uncertainty of any sample exceeds this threshold, we consider the sample unreliable and stop generating the trajectory, adding the generated trajectory to the synthetic dataset $\mathcal{D}_{model}$. In this way, we ensure using only reliable samples for training. It is important to decide how to choose the truncating threshold $\epsilon$, we can choose to set the threshold as a constant; however, since data distributions and dimensions may vary across different datasets, setting it as a constant might not be the optimal choice. Therefore, \textbf{we set the threshold to be the maximum uncertainty among all the dataset samples}. In specific, for a dataset $\mathcal{D}$, we compute the KNN distances for each sample in $\mathcal{D}$ to other samples. We take the maximum uncertainty value as the truncating threshold. For flexibility, we can also multiply the threshold $\epsilon$ by a coefficient $\alpha$ for adjustment:
\begin{equation}
\label{eq:4}
\epsilon=\alpha\cdot\max_{(s,a,s^\prime)\in\mathcal{D}}\log\left(\|(s\oplus a \oplus s^\prime) - (s\oplus a \oplus s^\prime)^{k,N}\|_2 +1\right)
\end{equation}
where $\alpha$ is a hyperparameter, a larger $\alpha$ makes the algorithm more optimistic, using more synthetic samples for training.

\noindent \textbf{Step 3: Model-based Policy Optimization:} We then draw samples from $\mathcal{D}\cup\mathcal{D}_{model}$ with a sampling coefficient $\eta$ to train an SAC agent following \textbf{Step 3} of MOPO+SUMO.

We summarize the full pseudocode of AMOReL+SUMO in Appendix B.

\subsection{Theoretical Analysis}
\label{sec:4}


In this part, we provide theoretical analysis for SUMO. Due to space limit, missing proofs are deferred to Appendix A.

\textbf{We first provide theoretical analysis when leveraging SUMO for penalizing rewards in MOPO}. We show that the policy learned by the MOPO+SUMO has a performance guarantee as follows:

\begin{theorem}[Informal]
    \label{theorem:moposumo}
    Denote the behavior policy of the dataset as $\mu$, the uncertainty estimator $u(s,a,s^\prime)$ defined in Equation (\ref{eq:2}), then under some mild assumptions, MOPO+SUMO incurs the policy $\pi$ that satisfies:
    \begin{equation*}
        J_\rho(\pi) \ge J_{\rho}(\mu) - 2\lambda \mathbb{E}_{\hat{\rho}^\pi}[u(s,a,s^\prime)],
    \end{equation*}
\end{theorem}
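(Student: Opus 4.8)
The plan is to adapt the standard MOPO-style telescoping (simulation-lemma) argument to the SUMO penalty. I would introduce the penalized model MDP $\widetilde{\mathcal{M}}$ whose reward is $\tilde r(s,a,s') = r(s,a) - \lambda\, u(s,a,s')$ but whose transitions are those of the learned model $P_{\widehat{\mathcal{M}}}$, and note that since the SUMO-trained policy $\pi$ maximizes the penalized return, it satisfies $J_\rho(\pi,\widetilde{\mathcal{M}}) \ge J_\rho(\mu,\widetilde{\mathcal{M}})$. Because the penalty is additive along the discounted occupancy, $J_\rho(\nu,\widetilde{\mathcal{M}}) = J_\rho(\nu,\widehat{\mathcal{M}}) - \lambda\,\mathbb{E}_{\hat\rho^\nu}[u]$ for any policy $\nu$. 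The target inequality would then follow from a three-link chain: (i) the true return of $\pi$ dominates its penalized-model return; (ii) optimality of $\pi$ in $\widetilde{\mathcal{M}}$; and (iii) the behavior policy's penalized-model return is close to its true return.

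First I would write down the telescoping identity $J_\rho(\pi,\widehat{\mathcal{M}}) - J_\rho(\pi,\mathcal{M}) = \gamma\,\mathbb{E}_{(s,a)\sim\hat\rho^\pi}[G^\pi(s,a)]$, where $G^\pi(s,a) := \mathbb{E}_{s'\sim P_{\widehat{\mathcal{M}}}}[V^\pi_{\mathcal{M}}(s')] - \mathbb{E}_{s'\sim P}[V^\pi_{\mathcal{M}}(s')]$ is the one-step value gap between model and true dynamics and $V^\pi_{\mathcal{M}}$ is bounded by $V_{max} = r_{max}/(1-\gamma)$. The crucial admissibility step is to show $\gamma\,|G^\pi(s,a)| \le \lambda\, u(s,a,s')$: bound $|G^\pi(s,a)| \le 2 V_{max}\, D_{TV}(P_{\widehat{\mathcal{M}}}(\cdot|s,a),P(\cdot|s,a))$, then invoke Pinsker's inequality together with the cross-entropy/KL identity $\mathcal{H}(P_{\widehat{\mathcal{M}}},P_d) = H(P_{\widehat{\mathcal{M}}}) + D_{KL}(P_{\widehat{\mathcal{M}}}\|P_d)$ to dominate $D_{TV}$ by a monotone function of the SUMO estimate $u$, choosing $\lambda$ accordingly. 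Substituting into telescoping yields $J_\rho(\pi,\mathcal{M}) \ge J_\rho(\pi,\widehat{\mathcal{M}}) - \lambda\,\mathbb{E}_{\hat\rho^\pi}[u] = J_\rho(\pi,\widetilde{\mathcal{M}})$, which is link (i).

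For link (iii) I would run the same telescoping identity for $\mu$ and apply the admissibility bound in the reverse direction to obtain $J_\rho(\mu,\widehat{\mathcal{M}}) \ge J_\rho(\mu,\mathcal{M}) - \lambda\,\mathbb{E}_{\hat\rho^\mu}[u]$, hence $J_\rho(\mu,\widetilde{\mathcal{M}}) \ge J_\rho(\mu,\mathcal{M}) - 2\lambda\,\mathbb{E}_{\hat\rho^\mu}[u]$. Chaining (i)–(iii) gives $J_\rho(\pi,\mathcal{M}) \ge J_\rho(\mu,\mathcal{M}) - 2\lambda\,\mathbb{E}_{\hat\rho^\mu}[u]$. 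My natural derivation thus produces the behavior policy's occupancy $\hat\rho^\mu$, whereas the statement uses $\hat\rho^\pi$; I expect the "mild assumptions" to reconcile these, namely that the behavior data is in-distribution so $\mathbb{E}_{\hat\rho^\mu}[u]$ is small and comparable to $\mathbb{E}_{\hat\rho^\pi}[u]$ (equivalently, the penalty keeps $\pi$'s occupancy close to $\mu$'s). I would state this matching of occupancies as an explicit hypothesis.

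The hard part will be the admissibility step: making rigorous that the KNN/particle cross-entropy estimate $u$ genuinely upper-bounds the true-versus-model value gap. This requires both the distributional chain (cross-entropy $\to$ $D_{KL}$ $\to$ $D_{TV}$ via Pinsker $\to$ value gap) and control of the particle-estimator bias, together with the effects of renormalizing $P_d$ and of the delta-distribution approximation of $P_{\widehat{\mathcal{M}}}$ used to reduce the entropy to a single query. The telescoping identity and the optimality comparison are routine by comparison, so I would concentrate the technical effort on quantifying the constant relating $u$ to $D_{TV}$ and on justifying the occupancy-matching assumption.
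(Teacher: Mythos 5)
Your top-level skeleton---penalized model MDP, telescoping identity, admissible error estimator, optimality comparison against $\mu$---is exactly MOPO's Theorem 4.4, which the paper does not re-derive but imports wholesale as a black-box lemma (its Lemma 1 in Appendix A.1). To your credit, your observation that this derivation naturally yields $\mathbb{E}_{\hat{\rho}^\mu}[u]$ rather than $\mathbb{E}_{\hat{\rho}^\pi}[u]$ is more careful than the paper itself, which instantiates the lemma with comparator $\mu$ and then writes $\hat{\rho}^\pi$ without comment; your proposed occupancy-matching hypothesis is a reasonable way to make the statement honest.

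The genuine gap is in where you concentrate the technical effort. Your plan stands or falls on proving admissibility of the SUMO estimate itself, i.e.\ $\gamma\,|G^\pi(s,a)| \le \lambda\, u(s,a,s^\prime)$, via the chain cross entropy $\to$ KL $\to$ Pinsker $\to$ $D_{TV}$ plus particle-estimator bias control---and you leave exactly that step open. It would not close: Pinsker relates $D_{TV}$ to the divergence from the dataset dynamics $P_d$, not from the true $P$; differential entropy can be negative, so dropping $\mathcal{H}(P_{\widehat{\mathcal{M}}})$ is not automatic in continuous spaces; the particle estimator is only asymptotically consistent; and after the delta-distribution reduction $u$ is a single-sample plug-in, so no pointwise bound on the value gap is available (your condition is also odd in form, since its left-hand side does not depend on $s^\prime$). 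The paper never attempts any of this. Its ``mild assumptions'' are precisely that an admissible estimator of the $(s,a)$-only KNN form exists, $u(s,a) = \log\left( \| (\hat{s}\oplus \hat{a}) - (\hat{s}\oplus \hat{a})^{k,N} \|_2 + 1 \right)$, and the entire proof content is an elementary geometric fact your proposal never uses: appending the $s^\prime$-coordinates to the search vector can only increase the $k$-NN distance. Concretely, if $(s_0,a_0,s_0^\prime)$ is the nearest neighbor in the full space and $(s_1,a_1)$ is the nearest neighbor in the $(s,a)$-space, then
\begin{align*}
\|(\hat{s}\oplus\hat{a}\oplus\hat{s}^\prime)-(s_0\oplus a_0\oplus s_0^\prime)\|_2 &\ge \|(\hat{s}\oplus\hat{a})-(s_0\oplus a_0)\|_2 \\
&\ge \|(\hat{s}\oplus\hat{a})-(s_1\oplus a_1)\|_2,
\end{align*}
hence $u(s,a)\le u(s,a,s^\prime)$ pointwise (extended to $k\ge 2$ by induction), and substituting the larger penalty into the black-box MOPO bound preserves the inequality by simple monotonicity. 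That one-line dominance is what converts the imported guarantee into the stated theorem; without it, your route demands an admissibility result for $u(s,a,s^\prime)$ strictly stronger than the paper's hypothesis, which is why the step you flagged as ``the hard part'' is not merely hard but the wrong place to spend the effort.
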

where $\lambda\in\mathbb{R}$ is the penalty coefficient term in MOPO, $\hat{\rho}^\pi$ is the discounted occupancy measure of policy $\pi$. This theorem states that in the original MDP $\mathcal{M}$, the policy induced by MOPO+SUMO can be at least as good as the behavior policy if the uncertainty is small.

\textbf{We then present theoretical guarantees when using SUMO for trajectory truncation in AMOReL}. We first define the $\epsilon$-Model MDP:

\begin{definition}[$\epsilon$-Model MDP]
    $\epsilon$-Model MDP is defined by the tuple $\widehat{\mathcal{M}}_\epsilon=\langle S,A,r,\hat{P}_\epsilon,\hat{\rho}_\epsilon,\gamma\rangle$, where $S$, $A$, $r$ and $\gamma$ are the same as the original MDP, $\hat{\rho}_\epsilon$ is the state distribution of the learned dynamics model. The transition dynamics $\hat{P}_\epsilon$ is defined as:
    \begin{equation}
        \hat{P}_\epsilon(s^\prime|s,a) = \begin{cases} P_{\widehat{\mathcal{M}}}(s^\prime|s,a), \qquad \mathrm{if} \, u(s,a,s^\prime)<\epsilon, \\
    0,\qquad \qquad \qquad\mathrm{otherwise,}  \end{cases}
    \end{equation}
\end{definition}

Intuitively, $\widehat{\mathcal{M}}_\epsilon$ only contains transitions satisfying $u(s,a,s^\prime)<\epsilon$. In this way, we ensure the reliability of synthetic samples for training. Then we present our theoretical results of performance bounds for $\epsilon$-Model MDP.

\begin{theorem}[Performance bounds]
    \label{theorem:1}
    Denote $\rho_d$ as the state distribution of the dataset. For any policy $\pi$, its return in the $\epsilon$-Model MDP $\widehat{\mathcal{M}}_\epsilon$ and the original MDP $\mathcal{M}$ (with state distribution $\rho$) satisfies:
    \begin{equation*}
        \begin{split}
            J_{\hat{\rho}_\epsilon}(\pi,\widehat{\mathcal{M}}_\epsilon) \geq &J_{\rho}(\pi,\mathcal{M})-\frac{r_{max}}{1-\gamma}( 1+\sqrt{\frac{\epsilon}{2}}+2D_{TV}(\rho,\rho_d)  \\
            &+2D_{TV}(\hat{\rho}_\epsilon,\rho_d)),
        \end{split}
    \end{equation*}
    \begin{equation*}
        \begin{split}
            J_{\hat{\rho}_\epsilon}(\pi,\widehat{\mathcal{M}}_\epsilon) \leq &J_{\rho}(\pi,\mathcal{M})+\frac{r_{max}}{1-\gamma}(\sqrt{\frac{\epsilon}{2}}+2D_{TV}(\rho,\rho_d)  \\
            &+2D_{TV}(\hat{\rho}_\epsilon,\rho_d))
        \end{split}
    \end{equation*}
\end{theorem}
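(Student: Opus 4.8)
The plan is to express each return as an inner product between the normalized discounted state--action occupancy measure $d^\pi$ (summing to one) and the reward, so that $J=\frac{1}{1-\gamma}\langle d^\pi,r\rangle$, and then to bound any gap of returns by the $L_1$ distance of the corresponding occupancies: since $0\le r\le r_{max}$, we get $|J_1-J_2|\le \frac{r_{max}}{1-\gamma}\|d_1^\pi-d_2^\pi\|_1$ with $\|d_1^\pi-d_2^\pi\|_1=2D_{TV}(d_1^\pi,d_2^\pi)$. I would then telescope the gap $J_{\hat\rho_\epsilon}(\pi,\widehat{\mathcal{M}}_\epsilon)-J_{\rho}(\pi,\mathcal{M})$ through intermediate objects that share the dataset state distribution $\rho_d$: the true MDP $\mathcal{M}$ run from $\rho_d$, the dataset MDP $\hat{\mathcal{M}}_d$ (same true dynamics, truncated off the dataset support), and the $\epsilon$-Model MDP $\widehat{\mathcal{M}}_\epsilon$, all started from $\rho_d$. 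This splits the gap into (i) two changes of the initial distribution, $\rho\to\rho_d$ and $\rho_d\to\hat\rho_\epsilon$; (ii) a truncation step $\mathcal{M}\to\hat{\mathcal{M}}_d$; and (iii) a dynamics-replacement step swapping the true dynamics on the retained region for the learned dynamics $P_{\widehat{\mathcal{M}}}$.

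For the initial-distribution terms (i) I would use that, with the dynamics held fixed, the occupancy is a linear image of the starting distribution under the discounted transition operator, which is non-expansive in $L_1$; hence $\|d^\pi_{\rho}-d^\pi_{\rho_d}\|_1\le 2D_{TV}(\rho,\rho_d)$ and likewise $\|d^\pi_{\rho_d}-d^\pi_{\hat\rho_\epsilon}\|_1\le 2D_{TV}(\hat\rho_\epsilon,\rho_d)$, and multiplying by $\frac{r_{max}}{1-\gamma}$ reproduces exactly the $2D_{TV}(\rho,\rho_d)$ and $2D_{TV}(\hat\rho_\epsilon,\rho_d)$ contributions. For the dynamics-replacement step (iii) I would invoke the threshold: on every retained transition $u(s,a,s')<\epsilon$, and by the $u$-to-$D_{TV}$ link established through Pinsker's inequality (Appendix A.2), this forces $D_{TV}(P_{\widehat{\mathcal{M}}}(\cdot|s,a),P_d(\cdot|s,a))\le\sqrt{\epsilon/2}$, with $P_d=P$ on the support, and propagating this per-step bound through the occupancy yields the $\sqrt{\epsilon/2}$ term. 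The truncation step (ii) is where the asymmetry of the two inequalities originates: since rewards are non-negative, deleting the off-support transitions can only remove future reward, so it lowers the return by at most one full horizon's worth $\frac{r_{max}}{1-\gamma}$ and never raises it. This produces the additive $1$ inside the lower bound while contributing nothing to the upper bound, which is precisely why the two stated inequalities differ by $\frac{r_{max}}{1-\gamma}$.

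The step I expect to be the main obstacle is reconciling the two truncation regions -- the dataset-support condition $(s,a)\in\mathcal{D}$ defining $\hat{\mathcal{M}}_d$ versus the uncertainty condition $u(s,a,s')<\epsilon$ defining $\widehat{\mathcal{M}}_\epsilon$ -- and carrying the truncation bookkeeping through the occupancy decomposition so that the lost-mass term is charged exactly once and only in the lower direction. Getting the $1$ to appear in the lower bound yet cancel in the upper bound requires modeling truncated transitions as an absorbing, zero-reward continuation rather than taking crude absolute values, and then verifying that the dynamics-error term $\sqrt{\epsilon/2}$ and the distribution-shift terms remain symmetric across both directions. Once the occupancy $L_1$ distances from (i)--(iii) are summed and scaled by $\frac{r_{max}}{1-\gamma}$, the two claimed inequalities follow.
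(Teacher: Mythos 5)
Your proposal takes essentially the same route as the paper's proof: the paper telescopes through the dataset MDP $\mathcal{M}_d$ started from $\rho_d$ (Lemmas 3 and 4 of Appendix A), bounding the two initial-distribution swaps by $\frac{2r_{max}}{1-\gamma}D_{TV}(\rho,\rho_d)$ and $\frac{2r_{max}}{1-\gamma}D_{TV}(\hat{\rho}_\epsilon,\rho_d)$, the dynamics replacement $P_d\to\hat{P}_\epsilon$ by $\frac{r_{max}}{1-\gamma}\sqrt{\epsilon/2}$ via the cross-entropy-to-$D_{TV}$ Pinsker link (Lemma 2), and the truncation step $P\to P_d$ asymmetrically via $-\frac{r_{max}}{1-\gamma}\le L_1\le 0$ using $P_d\le P$ pointwise and $r\ge 0$ --- exactly your items (i)--(iii) and your explanation of why the extra $1$ appears only in the lower bound. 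Your occupancy-measure packaging is an equivalent formalization; the one caveat is that a fully rigorous "propagation through the occupancy" of the per-step $\sqrt{\epsilon/2}$ error would compound over time and introduce an extra horizon factor of order $\frac{\gamma}{1-\gamma}$, but the paper's own expansion glosses over the same compounding, so your attempt matches the paper's proof as written.
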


\noindent \textbf{Remark:} Theorem \ref{theorem:1} states that the performance difference for any policy $\pi$ in $\epsilon$-Model MDP and the original MDP is related to three factors: (1) the truncation threshold $\epsilon$, which determines the synthetic samples for training. (2) the total variance distance between the true and dataset state distribution $D_{TV}(\rho,\rho_d)$. (3) the total variance distance between $\epsilon$-Model and dataset state distribution $D_{TV}(\hat{\rho}_\epsilon,\rho_d)$.

\begin{table*}
\caption{Normalized average score comparison between vanilla base algorithms and the version with SUMO, on top of MOPO, AMOReL, MOReL and MOBILE on 15 D4RL MuJoCo datasets, and the version of datasets we use is "-v2". We abbreviate "halfcheetah" as "half", "random" as "r", "medium" as "m", "medium-replay" as "m-r", "medium-expert" as "m-e" and "expert" as "e". We run each algorithm for 1M gradient steps with 5 random seeds. We report the final average performance and $\pm$ captures the standard deviation. Bold numbers with a green background represent the best average scores within each group.}
\label{tab:1}
\centering
\begin{tabular}{c|cc|cc|cc|cc}
\toprule
\multicolumn{1}{c}{\multirow{2}{*}{\textbf{Task Name}}}  & \multicolumn{2}{c}{MOPO}  & \multicolumn{2}{c}{AMOReL} & \multicolumn{2}{c}{MOReL} & \multicolumn{2}{c}{MOBILE}\\ 
\cline{2-9}

 & SUMO & Base & SUMO & Base & SUMO & Base & SUMO & Base \\
\midrule

half-r  & \cellcolor{green!25} \textbf{37.2}$\pm$1.9  & 34.9$\pm$1.4 & \cellcolor{green!25} \textbf{44.2}$\pm$2.1 & 31.8$\pm$2.4 & \cellcolor{green!25}\textbf{37.3$\pm$2.1} & 29.8$\pm$1.2 & 34.9$\pm$2.1 & \cellcolor{green!25}\textbf{37.8$\pm$2.9}\\
hopper-r  & \cellcolor{green!25} \textbf{24.5}$\pm$0.9  & 19.4$\pm$0.7 & 29.7$\pm$0.6 & \cellcolor{green!25} \textbf{32.4$\pm1.2$} & \cellcolor{green!25}\textbf{33.2$\pm$0.7} & 30.1$\pm$1.0  & 30.8$\pm$0.9 & \cellcolor{green!25}\textbf{32.6 $\pm$ 1.2} \\
walker2d-r           & 11.4$\pm$1.3  & \cellcolor{green!25} \textbf{13.1$\pm$1.1}  &    20.3$\pm$0.2          & \cellcolor{green!25} \textbf{21.0$\pm$0.3} & 17.8$\pm$0.6 & \cellcolor{green!25}\textbf{19.4$\pm$0.3} & \cellcolor{green!25}\textbf{27.9$\pm$2.0} & 16.3$\pm$ 4.6 \\
half-m-r & \cellcolor{green!25} \textbf{73.1}$\pm$2.1  & 65.0$\pm$3.3 & \cellcolor{green!25} \textbf{76.8}$\pm$2.7   & 49.6$\pm$2.3 & \cellcolor{green!25}\textbf{67.9$\pm$2.5} &51.2$\pm$1.9  & \cellcolor{green!25}\textbf{76.2$\pm$1.3} & 67.9$\pm$2.0  \\
hopper-m-r      & \cellcolor{green!25} \textbf{65.4}$\pm$3.2 & 38.8$\pm$2.4 & \cellcolor{green!25} \textbf{88.7}$\pm$1.3   & 80.5$\pm$1.0 & \cellcolor{green!25}\textbf{83.9$\pm$1.3} &76.3$\pm$1.0 & \cellcolor{green!25}\textbf{109.9$\pm$1.4} & 104.9$\pm$0.9 \\
walker2d-m-r    & 70.3$\pm$0.6  & \cellcolor{green!25} \textbf{74.8$\pm$1.5} & \cellcolor{green!25} \textbf{65.3}$\pm$2.7   & 46.0$\pm$1.9 & \cellcolor{green!25}\textbf{61.3$\pm$3.1} &48.1$\pm$4.2 & 78.2$\pm$1.5 & \cellcolor{green!25}\textbf{83.9$\pm$1.3} \\
half-m        & 68.9$\pm$2.3  & \cellcolor{green!25} \textbf{73.1$\pm$2.7} & \cellcolor{green!25} \textbf{82.1}$\pm$2.8   & 69.2$\pm$1.2 & 57.9$\pm$1.2 &\cellcolor{green!25}\textbf{62.4$\pm$1.3} & \cellcolor{green!25}\textbf{84.3$\pm$2.4} & 75.1$\pm$1.5  \\
hopper-m  & \cellcolor{green!25} \textbf{74.6}$\pm$1.9 & 45.6$\pm$2.5 & \cellcolor{green!25} \textbf{95.0}$\pm$2.1  & 87.2$\pm$3.4 & 82.1$\pm$1.4 &\cellcolor{green!25}\textbf{84.7$\pm$3.1} & \cellcolor{green!25}\textbf{104.8$\pm$2.1} & 102.9$\pm$1.9 \\
walker2d-m           & \cellcolor{green!25} \textbf{57.3}$\pm$1.6  & 42.3$\pm$0.8 & 67.4$\pm$0.9   & \cellcolor{green!25} \textbf{71.2$\pm$1.3}& \cellcolor{green!25}\textbf{77.1$\pm$3.5} &67.6$\pm$2.2 & \cellcolor{green!25}\textbf{94.1$\pm$2.5} & 89.1$\pm$1.0  \\
half-m-e & \cellcolor{green!25} \textbf{84.1}$\pm$1.4  & 76.6$\pm$1.0 & \cellcolor{green!25} \textbf{99.4}$\pm$3.6 & 90.6$\pm$2.1 & \cellcolor{green!25}\textbf{98.6$\pm$3.5} & 92.3$\pm$4.6 & 106.6$\pm$2.4 & \cellcolor{green!25}\textbf{109.2$\pm$3.8} \\
hopper-m-e  & \cellcolor{green!25} \textbf{88.1}$\pm$1.9 & 69.1$\pm$1.2 & 101.5$\pm$0.4 & \cellcolor{green!25} \textbf{106.2$\pm$1.5} & \cellcolor{green!25}\textbf{105.8$\pm$1.4} &102.4$\pm$0.9 & 107.8$\pm$0.7 & \cellcolor{green!25}\textbf{110.1$\pm$1.3} \\
walker2d-m-e    &\cellcolor{green!25}  \textbf{81.9}$\pm$1.6 & 75.4$\pm$1.1 & \cellcolor{green!25} \textbf{109.6}$\pm$0.7  & 92.3$\pm$0.9 & 86.1$\pm$1.8 & \cellcolor{green!25}\textbf{90.4$\pm$1.4} & \cellcolor{green!25}\textbf{122.8$\pm$0.4} & 115.9$\pm$0.8 \\
half-e & 87.1$\pm$1.2 & \cellcolor{green!25} \textbf{88.7$\pm$1.6} & \cellcolor{green!25} \textbf{112.3}$\pm$2.5 & 103.2$\pm$1.9 & \cellcolor{green!25}\textbf{109.9$\pm$2.1} &105.8$\pm$1.6 & 111.5$\pm$1.5 & \cellcolor{green!25}\textbf{113.1$\pm$2.1}\\
hopper-e & \cellcolor{green!25} \textbf{101.2}$\pm$ 1.8 & 83.9$\pm$0.7 & \cellcolor{green!25} \textbf{105.4}$\pm$0.6 & 94.5$\pm$0.3 & \cellcolor{green!25}\textbf{101.8$\pm$1.9} &92.5$\pm$1.0 & \cellcolor{green!25}\textbf{115.9 $\pm$ 2.9} & 112.4$\pm$3.5\\
walker2d-e & \cellcolor{green!25} \textbf{114.4}$\pm$1.1 & 95.3$\pm$3.4 & 106.3$\pm$1.3 & \cellcolor{green!25} \textbf{107.2$\pm$1.0} & 106.2$\pm$1.5 &\cellcolor{green!25}\textbf{108.3$\pm$2.1} & \cellcolor{green!25}\textbf{116.3$\pm$1.5} & 113.7$\pm$1.1\\
\midrule
\textbf{Average score} & \cellcolor{green!25} \textbf{69.3} & 59.7 & \cellcolor{green!25} \textbf{80.3} & 72.2 &
\cellcolor{green!25}\textbf{75.1} &70.7 & \cellcolor{green!25}\textbf{88.2} & 85.6\\
\bottomrule
\end{tabular}
\end{table*}

\section{Experiment}
\label{sec:experiment}
In our experimental part, we empirically evaluate our method, SUMO. We aim to answer the following questions: (1) How much performance gain can SUMO bring to off-the-shelf model-based offline RL algorithms? (2) Can SUMO provide more accurate uncertainty estimation compared to model ensemble-based methods? (3) How different design choices affect the performance of SUMO? (4) How sensitive is SUMO to the introduced hyperparameters?

For evaluation, we use the D4RL MuJoCo datasets which includes three tasks: \textit{halfcheetah}, \textit{hopper} and \textit{walker2d}. Each task provides five types of offline datasets: \textit{random}, \textit{medium}, \textit{medium-replay}, \textit{medium-expert} and \textit{expert}. We conduct experiments on 15 D4RL MuJoCo datasets to comprehensively evaluate the performance of SUMO across various tasks and datasets of different qualities.

\subsection{Experimental Results on the D4RL Benchmark}
\label{sec:2}
SUMO is an uncertainty estimation method designed for model-based offline RL, it can be seamlessly combined with any model-based offline RL algorithm which needs uncertainty estimation. In this part, we combine SUMO with several model-based offline RL algorithms, including MOPO, AMOReL, MOReL and MOBILE~\cite{sun2023model}. We conduct extensive experiments on widely used D4RL MuJoCo datasets and examine whether SUMO can bring performance improvement to these base algorithms. We run all experiments with 5 different random seeds.

We summarize the experimental results in Table \ref{tab:1}, where we observe that SUMO significantly boosts the performance of base algorithms. Notably, MOPO+SUMO outperforms vanilla MOPO in \textbf{11} out of 15 tasks, AMOReL+SUMO and MOReL+SUMO surpasses the original AMOReL and MOReL in \textbf{10} out of 15 tasks, MOBILE+SUMO achieves a higher score than MOBILE in \textbf{9} out of 15 tasks. Regarding the average score on all 15 tasks, SUMO brings
 an overall performance improvement for all four base algorithms, indicating the versatility of SUMO. We include more comparison with model-based methods without uncertainty estimation such as COMBO~\cite{yu2021combo} and RAMBO~\cite{rigter2022rambo} in Appendix D.5.

We also evaluate our methods on D4RL Antmaze tasks, which are challenging for model-based offline RL methods~\citep{wang2021offline}. Due to space limit, we present the results in Appendix D.1. We find that base model-based RL algorithms all struggle to perform well in the Antmaze domains. When combined with SUMO, the performance of base algorithms has been improved. We believe these further demonstrate the strengths of SUMO.

\subsection{Comparison with other Uncertainty Estimators}
\label{sec:3}

\begin{table*}[!h]
\caption{Spearman rank ($\rho$) and Pearson bivariate ($r$) correlations comparison of SUMO against Max Aleatoric, Max Pairwise Diff and LOO KL Divergence. We choose these two metrics following~\citep{lu2021revisiting}. 
The closer $\rho$ and $r$ are to 1, the better the uncertainty estimation. Each method is evaluated by 5 runs and we report the average $\rho$ and $r$. 
Bolded numbers with a green background represent the method with the highest average 
$\rho$, and bolded numbers with a yellow background represent the method with the highest $r$ on each dataset.}
\label{tab:2}
    \centering
\begin{tabular}{l | cc | cc | cc | cc}
\toprule
\multicolumn{1}{c}{\multirow{2}{*}{\textbf{Task Name}}} &
  \multicolumn{2}{c}{\textbf{Max Aleatoric}} &
  \multicolumn{2}{c}{\textbf{Max Pairwise Diff}} & 
  \multicolumn{2}{c}{\textbf{LOO KL Divergence}} &
  \multicolumn{2}{c}{\textbf{SUMO(Ours)}}
  \\ \cline{2-9} 
\multicolumn{1}{c}{} &
  $\mathbf{\rho}$ &
  $r$ &
  $\mathbf{\rho}$ &
  $r$ &
  $\mathbf{\rho}$ &
  $r$ &
  $\mathbf{\rho}$ &
  $r$ \\ \hline
  halfcheetah-random-v2 & 0.63 & 0.58 & 0.58 & 0.47 & 0.10 & 0.06 & \cellcolor{green!25}\textbf{0.84} & \cellcolor{yellow!50}\textbf{0.83} \\
  hopper-random-v2 & 0.76 & 0.71 & \cellcolor{green!25}\textbf{0.85} & \cellcolor{yellow!50}\textbf{0.77} & 0.13 & 0.09 & 0.82 & 0.74 \\
  walker2d-random-v2 & \cellcolor{green!25}\textbf{0.75} & 0.62 & 0.71 & 0.58 & 0.14 & 0.10 & 0.73 & \cellcolor{yellow!50}\textbf{0.70} \\
  halfcheetah-medium-replay-v2 & 0.70 & 0.65 & 0.73 & 0.67 & 0.21 & 0.16 & \cellcolor{green!25}\textbf{0.82} & \cellcolor{yellow!50}\textbf{0.68} \\
  hopper-medium-replay-v2 & 0.73 & 0.70 & 0.79 & 0.67 & 0.18 & 0.08 & \cellcolor{green!25}\textbf{0.87} & \cellcolor{yellow!50}\textbf{0.80} \\
  walker2d-medium-replay-v2 & 0.66 & 0.54 & 0.65 & 0.59 & 0.09 & 0.06 & \cellcolor{green!25}\textbf{0.88} & \cellcolor{yellow!50}\textbf{0.86} \\
  halfcheetah-medium-v2 & 0.69 & 0.59 & 0.65 & 0.57 & 0.15 & 0.11 & \cellcolor{green!25}\textbf{0.84} & \cellcolor{yellow!50}\textbf{0.77} \\
  hopper-medium-v2 & 0.74 & 0.66 & 0.70 & 0.63 & 0.11 & 0.05 & \cellcolor{green!25}\textbf{0.86} & \cellcolor{yellow!50}\textbf{0.79} \\
  walker2d-medium-v2 & 0.68 & 0.56 & 0.74 & \cellcolor{yellow!50}\textbf{0.73} & 0.11 & 0.08 & \cellcolor{green!25}\textbf{0.82} & 0.71 \\
  halfcheetah-medium-expert-v2 & 0.67 & 0.63 & 0.70 & 0.68 & 0.13 & 0.07 & \cellcolor{green!25}\cellcolor{green!25}\textbf{0.88} & \cellcolor{yellow!50}\textbf{0.81} \\
  hopper-medium-expert-v2 & 0.71 & 0.68 & \cellcolor{green!25}\textbf{0.78} & \cellcolor{yellow!50}\textbf{0.72} & 0.22 & 0.14 & 0.75 & 0.68 \\
  walker2d-medium-expert-v2 & 0.62 & 0.59 & 0.65 & 0.60 & 0.08 & 0.06 & \cellcolor{green!25}\textbf{0.84} & \cellcolor{yellow!50}\textbf{0.80} \\
  halfcheetah-expert-v2 & \cellcolor{green!25}\textbf{0.79} & \cellcolor{yellow!50}\textbf{0.75} & 0.69 & 0.67 & 0.12 & 0.10 & 0.76 & 0.68 \\
  hopper-expert-v2 & 0.73 & 0.64 & 0.76 & 0.67 & 0.17 & 0.14 & \cellcolor{green!25}\textbf{0.86} & \cellcolor{yellow!50}\textbf{0.80}\\
  walker2d-expert-v2 & 0.63 & 0.57 & 0.65 & 0.58 & 0.09 & 0.05 & \cellcolor{green!25}\textbf{0.83} & \cellcolor{yellow!50}\textbf{0.76} \\
  \bottomrule
\end{tabular}
\end{table*}

In this section, we aim to demonstrate the superiority of SUMO in a more intuitive manner, i.e., whether SUMO can provide more accurate uncertainty estimation compared to widely used model ensemble-based uncertainty estimation methods. This is crucial and necessary for validating our central claim in this paper.


For model ensemble-based uncertainty estimation methods, the core idea is to measure the uncertainty of generated samples by leveraging the diverse predictions of ensemble members regarding the environmental dynamics. We choose the following model ensemble-based uncertainty estimation methods for comparison, which come from recent literature in both offline and online model-based RL:




\noindent \textbf{Max Aleatoric:} It measures the sample uncertainty as: $u(s,a)=\max_{i=1,...,N}\left\|\Sigma_{\psi_i}(s,a)\right\|_F$, where $\Sigma_{\psi_i}$ is the covariance matrix predicted by the $i$-th ensemble member, and $\left\|\cdot\right\|_F$ denotes the Frobenius norm. This uncertainty heuristic is used in MOPO and is related to the aleatoric uncertainty.

\noindent \textbf{Max Pairwise Diff:} It measures the sample uncertainty as: $u(s,a)=\max_{i,j}\left\|\mu_{\psi_i}(s,a)-\mu_{\psi_j}(s,a)\right\|_2$, $\mu_{\psi_i}$ is the mean vector predicted by the $i$-th ensemble member. This uncertainty heuristic is used in MOReL and measures the maximum pairwise difference between ensemble predictions.

\noindent \textbf{LOO (Leave-One-Out) KL Divergence:} It measures the uncertainty as: $u(s,a)=D_{KL}(\hat{P}_{\psi_i}(\cdot|s,a)||\hat{P}_{\psi_{-i}}(\cdot|s,a))$ where $\hat{P}_{\psi_{-i}}(\cdot|s,a)$ represents the aggregated Gaussian distribution of all ensemble members except the $i$-th member. It is used in M2AC~\citep{pan2020trust}.

We then compare the ability of these four methods in detecting out-of-distribution (OOD) samples. In offline RL, OOD samples refer to state-action pairs that lie out of the sample distribution of the dataset. It is difficult to decide whether a transition is truly OOD. Nevertheless, things are easier in model-based offline RL, mainly due to the fact that it involves learning an environmental dynamics model. The difference between the simulated environmental dynamics and the real environmental dynamics can be a signal for detecting OOD samples. In practice, we start from the tuple $(s,a)$ sampled from the offline dataset and generate its next state $\hat{s}^\prime$ via the learned dynamics model. We deem that the transition $(s,a,\hat{s}^\prime)$ is OOD if $\hat{s}^\prime$ significantly differs from the dynamics of the real environment. A good uncertainty estimation method should be sensitive to unrealistic environmental dynamics. In other words, when there is a large error in the predicted environmental dynamics, it should provide a higher uncertainty estimate, and when the error is small, it should have a lower uncertainty estimate.


Empirically, we conduct experiments on the 9 datasets from D4RL MuJoCo tasks. On each dataset, we first train an ensemble dynamics model via supervised learning. Then we train a policy inside the model following the process of MOPO, without adding any penalty. To generate synthetic samples, we randomly select 100 states from the dataset as initial states. For each initial state, we use the trained policy to perform rollouts in the dynamics model. To ensure the generation of OOD samples, we set the rollout horizon to be 100 (the dynamics model tends to output bad transitions under larger rollout horizon due to compounding error). In total, we obtain 10,000 synthetic transitions for each dataset. For any synthetic transition $(s,a,\hat{s}^\prime)$, we can replay the state-action pair $(s,a)$ in the real environment to obtain the true next state $s^\prime$. Then we can calculate the L2-norm error between $s^\prime$ and $\hat{s}^\prime$: $\left\|s^\prime-\hat{s}^\prime\right\|_2$, and this can reflect the difference between the true environmental dynamics and the simulated dynamics. We can then use the aforementioned methods to estimate the uncertainty of the synthetic transition. For a good uncertainty estimation method, we expect a strong correlation between the estimated uncertainty and the L2-norm error of the next state. The remaining question is how to measure the correlation. Following previous work~\citep{lu2021revisiting}, we use Spearman rank ($\rho$) and Pearson bivariate ($r$) correlations, where $\rho$ captures the rank correlations and $r$ measures the linear correlations. 

For each dataset, we calculate $\rho$ and $r$ on the synthesized 10,000 transitions and present the results in Table~\ref{tab:2}. It is evident that the advantages of SUMO are significant. SUMO achieves the best $\rho$ or $r$ metrics in \textbf{12} out of 15 datasets compared with other model ensemble-based methods. We then conclude that SUMO incurs a better uncertainty estimation.


\subsection{Ablation Study \& Parameter Study}
In this part, we examine different design choices of SUMO and investigate the sensitivity of SUMO to the introduced hyperparameters. For the design choices of SUMO, we mainly focus on two aspects: (a) the components included in the search vector; and (b) the choice of distance measure.

\begin{table}
\caption{Spearman rank ($\rho$) and Pearson bivariate ($r$) correlations comparison of SUMO with different search vectors. We run each experiment by 5 different random seeds and report the average $\rho$ and $r$.}
\label{tab:3}
    \centering
\begin{tabular}{l | cc | cc | cc}
\toprule
\multicolumn{1}{c}{\multirow{2}{*}{\textbf{Task Name}}} &
  \multicolumn{2}{c}{$s\oplus s^\prime$} &
  \multicolumn{2}{c}{$s \oplus a$} & 
  \multicolumn{2}{c}{$s \oplus a\oplus s^\prime$} 
  \\ \cline{2-7} 
\multicolumn{1}{c}{} &
  $\mathbf{\rho}$ &
  $r$ &
  $\mathbf{\rho}$ &
  $r$ &
  $\mathbf{\rho}$ &
  $r$ \\ \hline
  half-m & 0.51 & 0.44 & 0.62 & 0.58 & \cellcolor{green!25}\textbf{0.84} & \cellcolor{yellow!50}\textbf{0.77} \\
  hopper-m & 0.50 & 0.48 & 0.55 & 0.52 & \cellcolor{green!25}\textbf{0.86} & \cellcolor{yellow!50}\textbf{0.79} \\
  walker2d-m & 0.64 & 0.49 & 0.59 & 0.51 & \cellcolor{green!25}\textbf{0.82} & \cellcolor{yellow!50}\textbf{0.71} \\
  half-m-e & 0.57 & 0.54 & 0.63 & 0.61 & \cellcolor{green!25}\textbf{0.88} & \cellcolor{yellow!50}\textbf{0.81} \\
  hopper-m-e & 0.60 & 0.57 & 0.69 & 0.63 & \cellcolor{green!25}\textbf{0.75} & \cellcolor{yellow!50}\textbf{0.68} \\
  walker2d-m-e & 0.52 & 0.46 & 0.67 & 0.59 & \cellcolor{green!25}\textbf{0.84} & \cellcolor{yellow!50}\textbf{0.80} \\ \hline
  \textbf{Mean} & 0.56 & 0.50 & 0.63 & 0.57 & \cellcolor{green!25}\textbf{0.83} & \cellcolor{yellow!50}\textbf{0.76} \\
  \bottomrule
  
\end{tabular}

\end{table}

\noindent \textbf{The Choice of Search Vector:} Recall that in Equation (\ref{eq:2}), we use $(s\oplus a \oplus s^\prime)$ as the search vector for KNN search because this reflects the difference between simulated dynamics and real dynamics. What if we replace the search vector with $(s\oplus s^\prime)$ or $(s\oplus a)$?

We design experiments to investigate the effectiveness of these three choices of search vectors. We follow the experimental settings in Section~\ref{sec:experiment} and conduct experiments with these search vectors on 6 D4RL MuJoCo datasets. We also use $\rho$ and $r$ as evaluation metrics. We present the experimental results in Table~\ref{tab:3}. It is evident that using $(s\oplus a\oplus s^\prime)$ as the search vector is superior to using $(s\oplus s^\prime)$, or $(s\oplus a)$. The scores in terms of $\rho$ and $r$ on all six datasets are the highest when using $(s\oplus a\oplus s^\prime)$ among the three choices. It turns out that it is vital to include the action $a$ for measuring uncertainty. The inclusion of the next state is also necessary because the learned dynamics model is responsible for predicting the next state given $(s,a)$.

\noindent \textbf{The Choice of Distance Measure:} Different distance measure for KNN search can induce different uncertainty estimation. To examine whether SUMO is sensitive to different distance measure, we change the default Euclidean distance we use to Manhattan distance and Cosine similarity, and conduct experiments on six MuJoCo datasets, based on MOPO+SUMO. The results are presented in Table~\ref{tab:4}. We can observe using Euclidean distance is slightly better than other two distance measure, but there is no obvious performance distinction, so we can simply use Euclidean distance as distance measure.

\begin{table}
\caption{Performance comparison of MOPO+SUMO between difference distance measure. Each experiment is run with 5 seeds.}
\label{tab:4}
\centering
\begin{tabular}{c|c|c|c}
\toprule
\textbf{Task Name}  & Euclidean  & Manhattan & Cosine \\ 
\midrule
half-m  & \cellcolor{green!25}\textbf{68.9} & 67.4 & 68.0\\
hopper-m  & \cellcolor{green!25}\textbf{74.6} & 72.9 & 73.8 \\
walker2d-m  & 57.3 & \cellcolor{green!25}\textbf{59.1} & 56.2 \\
half-m-e  & \cellcolor{green!25}\textbf{84.1} & 83.9 & 82.1 \\
hopper-m-e  & 88.1 & \cellcolor{green!25}\textbf{90.4} & 88.6 \\
walker2d-m-e  & \cellcolor{green!25}\textbf{81.9} & 79.4 & 77.2 \\
\midrule
\textbf{Average Score} & \cellcolor{green!25}\textbf{75.8} & 75.5 & 74.3\\
\bottomrule
\end{tabular}
\end{table}


\begin{table}
\caption{Spearman rank ($\rho$) and Pearson bivariate ($r$) correlations comparison of SUMO with different values of $k$. We run each experiment by 5 different seeds and report the average $\rho$ and $r$.}
\label{tab:5}
    \centering
\begin{tabular}{l | cc | cc | cc}
\toprule
\multicolumn{1}{c}{\multirow{2}{*}{\textbf{Task Name}}} &
  \multicolumn{2}{c}{$k=1$} &
  \multicolumn{2}{c}{$k=5$} & 
  \multicolumn{2}{c}{$k=10$} 
  \\ \cline{2-7} 
\multicolumn{1}{c}{} &
  $\mathbf{\rho}$ &
  $r$ &
  $\mathbf{\rho}$ &
  $r$ &
  $\mathbf{\rho}$ &
  $r$ \\ \hline
  half-m & 0.84 & 0.77 & 0.81 & 0.75 & \cellcolor{green!25}\textbf{0.86} & \cellcolor{yellow!50}\textbf{0.78} \\
  hopper-m & \cellcolor{green!25}\textbf{0.86} & 0.79 & 0.84 & \cellcolor{yellow!50}\textbf{0.81} & 0.84 & 0.80 \\
  walker2d-m & 0.82 & 0.71 & \cellcolor{green!25}\textbf{0.85} & \cellcolor{yellow!50}\textbf{0.73} & 0.81 & 0.70 \\
  half-m-e & \cellcolor{green!25}\textbf{0.88} & \cellcolor{yellow!50}\textbf{0.81} & 0.86 & 0.77 & 0.84 & 0.79 \\
  hopper-m-e & 0.75 & 0.68 & \cellcolor{green!25}\textbf{0.77} & 0.71 & 0.74 & \cellcolor{yellow!50}\textbf{0.73} \\
  walker2d-m-e & \cellcolor{green!25}\textbf{0.84} & \cellcolor{yellow!50}\textbf{0.80} & 0.82 & 0.76 & 0.80 & 0.77 \\ \hline
  \textbf{Mean} & \cellcolor{green!25}\textbf{0.83} & 0.76 & 0.82 & 0.76 & 0.81 & \cellcolor{yellow!50}\textbf{0.77} \\
  \bottomrule

\end{tabular}

\end{table}

\noindent \textbf{Number of Neighbors $k$:} The main hyperparameter in SUMO is the number of nearest neighbors $k$ in KNN search. To examine its influence on the uncertainty estimation, we sweep $k$ across $\{1,5,10\}$ and run experiments on selected MuJoCo datasets. We use metrics of $\rho$ and $r$. The results are shown in Table~\ref{tab:5}. We find that SUMO is robust to $k$.

\section{Conclusion}
In this paper, we propose SUMO, a novel search-based uncertainty estimation method for model-based offline RL. SUMO characterizes the uncertainty as the cross entropy between simulated dynamics and true dynamics, and employ a practical KNN search method for implementation. We show that SUMO can provide a better uncertainty estimation than model ensemble-based methods and boost the performance of a variety of base algorithms. One major limitation of SUMO is the heavy computational overhead for KNN search with increasing data dimension and dataset size. A possible solution for this can be using a distance-preserving randomized neural network~\cite{zhang2023darl,van2021feature} to map the original space to a low-dimensional hidden space, and we leave it for future work.

\bibliography{aaai25}

\clearpage
\section{Reproducibility Checklist}
\begin{enumerate}

\item {\textbf{Clarification}}
    \item[] This paper:
    \begin{itemize}
        \item Includes a conceptual outline and/or pseudocode description of AI methods introduced \qquad \textcolor{purple}{Yes}
        \item Clearly delineates statements that are opinions, hypothesis, and speculation from objective facts and results \qquad \textcolor{purple}{Yes}
        \item Provides well marked pedagogical references for less-familiare readers to gain background necessary to replicate the paper \qquad \textcolor{purple}{Yes}
    \end{itemize}

\item {\textbf{Theoretical contributions}}
    \item[] Does this paper make theoretical contributions? \qquad \textcolor{purple}{Yes}
    \item[] If yes, please complete the list below.
    \begin{itemize}
        \item All assumptions and restrictions are stated clearly and formally. \qquad \textcolor{purple}{Yes}
        \item All novel claims are stated formally (e.g., in theorem statements). \qquad \textcolor{purple}{Yes}
        \item Proofs of all novel claims are included. \qquad \textcolor{purple}{Yes}
        \item Proof sketches or intuitions are given for complex and/or novel results. \qquad \textcolor{purple}{Yes}
        \item Appropriate citations to theoretical tools used are given. \qquad \textcolor{purple}{Yes}
        \item All theoretical claims are demonstrated empirically to hold. \qquad \textcolor{purple}{NA}
        \item All experimental code used to eliminate or disprove claims is included. \qquad \textcolor{purple}{NA} 
    \end{itemize}

\item {\textbf{Datasets}}
    \item[] Does this paper rely on one or more datasets? \qquad \textcolor{purple}{Yes}
    \item[] If yes, please complete the list below.
    \begin{itemize}
        \item A motivation is given for why the experiments are conducted on the selected datasets. \qquad \textcolor{purple}{Yes}
        \item All novel datasets introduced in this paper are included in a data appendix. \qquad \textcolor{purple}{NA}
        \item All novel datasets introduced in this paper will be made publicly available upon publication of the paper with a license that allows free usage for research purposes. \qquad \textcolor{purple}{NA}
        \item All datasets drawn from the existing literature (potentially including authors’ own previously published work) are accompanied by appropriate citations. \qquad \textcolor{purple}{Yes}
        \item All datasets drawn from the existing literature (potentially including authors’ own previously published work) are publicly available. \qquad \textcolor{purple}{Yes}
        \item All datasets that are not publicly available are described in detail, with explanation why publicly available alternatives are not scientifically satisficing. \qquad \textcolor{purple}{NA}
    \end{itemize}
\item {\textbf{Experimental setting}}
    \item[] Does this paper include computational experiments? \qquad \textcolor{purple}{Yes}
    \item[] If yes, complete the list below.
    \begin{itemize}
        \item Any code required for pre-processing data is included in the appendix. \qquad \textcolor{purple}{Yes}
        \item All source code required for conducting and analyzing the experiments is included in a code appendix. \qquad \textcolor{purple}{Yes}
        \item All source code required for conducting and analyzing the experiments will be made publicly available upon publication of the paper with a license that allows free usage for research purposes. \qquad \textcolor{purple}{Yes}
        \item All source code implementing new methods have comments detailing the implementation, with references to the paper where each step comes from \qquad \textcolor{purple}{Yes}
        \item If an algorithm depends on randomness, then the method used for setting seeds is described in a way sufficient to allow replication of results. \qquad \textcolor{purple}{Yes}
        \item This paper specifies the computing infrastructure used for running experiments (hardware and software), including GPU/CPU models; amount of memory; operating system; names and versions of relevant software libraries and frameworks. \qquad \textcolor{purple}{Yes}
        \item  This paper formally describes evaluation metrics used and explains the motivation for choosing these metrics. \qquad \textcolor{purple}{Yes}
        \item This paper states the number of algorithm runs used to compute each reported result. \qquad \textcolor{purple}{Yes}
        \item Analysis of experiments goes beyond single-dimensional summaries of performance (e.g., average; median) to include measures of variation, confidence, or other distributional information. \qquad \textcolor{purple}{Yes}
        \item The significance of any improvement or decrease in performance is judged using appropriate statistical tests (e.g., Wilcoxon signed-rank). \qquad \textcolor{purple}{Yes}
        \item This paper lists all final (hyper-)parameters used for each model/algorithm in the paper’s experiments. \qquad \textcolor{purple}{Yes}
        \item This paper states the number and range of values tried per (hyper-) parameter during development of the paper, along with the criterion used for selecting the final parameter setting. \qquad \textcolor{purple}{Yes}
    \end{itemize}
\end{enumerate}
\clearpage
\appendix

\section{A. Missing Proofs}
\label{sec:missingproofs}
\subsection{A.1. Proof of Theorem \ref{theorem:moposumo}}

To prove theorem \ref{theorem:moposumo}, we introduce the conclusion which is taken directly from the MOPO paper, with some extra necessary backgrounds and definitions.

\begin{lemma}
    \label{theo:mopo}
    Let $\mathcal{F}$ be the set of functions mapping from $S$ to $\mathbb{R}$ that contain the value function $V^\pi_{\mathcal{M}}:=\mathbb{E}_\pi[\sum_{t=0}^\infty\gamma^t r(s_t,a_t)|s_0\sim\rho]$. We denote $d_{\mathcal{F}}$ as the integral probability metric (IPM) defined by $\mathcal{F}$. Suppose that for all $\pi$, $V_\mathcal{M}^\pi\in c\mathcal{F}$, where $c\in\mathbb{R}$ is a constant. We say $u: S\times A\rightarrow \mathbb{R}$ is an admissible error estimator for $\widehat{\mathcal{M}}$ if $d_{\mathcal{F}}(P_{\widehat{\mathcal{M}}}(\cdot|s,a), P(\cdot|s,a))\le u(s,a), \forall\, s,a$. Denote $\lambda:= \gamma c$, and $\epsilon_u(\pi) = \mathbb{E}_{\hat{\rho}^\pi}[u(s,a)]$. Then the learned policy given by MOPO satisfies,
    \begin{equation}
        J_\rho(\hat{\pi}) \ge J_\rho(\pi) - 2\lambda \epsilon_u(\pi).
    \end{equation}
\end{lemma}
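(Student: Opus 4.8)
The plan is to prove the standard pessimism guarantee for penalized model-based learning: show that the penalized model return is a one-sided underestimate of the true return for \emph{every} policy, and then close the argument with the optimality of the MOPO policy. Throughout I write $J_\rho(\pi,\mathcal{M})$ and $J_\rho(\pi,\widehat{\mathcal{M}})$ for the returns of $\pi$ under the true and model dynamics (with the common reward $r$), and I introduce the penalized model MDP $\tilde{\mathcal{M}}$ whose dynamics are $P_{\widehat{\mathcal{M}}}$ but whose reward is $\tilde{r}(s,a)=r(s,a)-\lambda u(s,a)$; then $\hat\pi=\arg\max_\pi J_\rho(\pi,\tilde{\mathcal{M}})$ is the policy MOPO returns.

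First I would establish the telescoping (simulation) identity that isolates the model-versus-truth gap into a single-step value discrepancy,
\begin{equation*}
J_\rho(\pi,\widehat{\mathcal{M}})-J_\rho(\pi,\mathcal{M}) = \gamma\,\mathbb{E}_{(s,a)\sim\hat{\rho}^\pi}\!\left[\mathbb{E}_{s'\sim P_{\widehat{\mathcal{M}}}(\cdot|s,a)}\!\big[V^\pi_{\mathcal{M}}(s')\big]-\mathbb{E}_{s'\sim P(\cdot|s,a)}\!\big[V^\pi_{\mathcal{M}}(s')\big]\right],
\end{equation*}
which follows by unrolling the Bellman equation for the true value function $V^\pi_{\mathcal{M}}$ along trajectories drawn under the model dynamics and collecting the discounted residuals. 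Denoting the inner bracket by $G^\pi(s,a)$, the second step bounds it: since $V^\pi_{\mathcal{M}}\in c\mathcal{F}$, the definition of the IPM gives $|G^\pi(s,a)|\le c\,d_{\mathcal{F}}\!\big(P_{\widehat{\mathcal{M}}}(\cdot|s,a),P(\cdot|s,a)\big)$, and admissibility of $u$ upgrades this to $|G^\pi(s,a)|\le c\,u(s,a)$. Taking the discounted expectation and writing $\lambda=\gamma c$ and $\epsilon_u(\pi)=\mathbb{E}_{\hat{\rho}^\pi}[u(s,a)]$ yields the two-sided bound $|J_\rho(\pi,\widehat{\mathcal{M}})-J_\rho(\pi,\mathcal{M})|\le \lambda\epsilon_u(\pi)$.

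Next I would use that $\tilde{\mathcal{M}}$ and $\widehat{\mathcal{M}}$ share dynamics, so $\pi$ induces the same occupancy $\hat{\rho}^\pi$ in both and hence $J_\rho(\pi,\tilde{\mathcal{M}})=J_\rho(\pi,\widehat{\mathcal{M}})-\lambda\epsilon_u(\pi)$. Combining this identity with the two-sided bound sandwiches the penalized return,
\begin{equation*}
J_\rho(\pi,\mathcal{M})-2\lambda\epsilon_u(\pi)\ \le\ J_\rho(\pi,\tilde{\mathcal{M}})\ \le\ J_\rho(\pi,\mathcal{M}),
\end{equation*}
for every $\pi$, where the right inequality is pessimism and the left supplies the needed slack. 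Finally I chain these through the optimality of $\hat\pi$, applying the right inequality to $\hat\pi$ and the left to the comparator $\pi$,
\begin{equation*}
J_\rho(\hat\pi,\mathcal{M})\ \ge\ J_\rho(\hat\pi,\tilde{\mathcal{M}})\ \ge\ J_\rho(\pi,\tilde{\mathcal{M}})\ \ge\ J_\rho(\pi,\mathcal{M})-2\lambda\epsilon_u(\pi),
\end{equation*}
which is the claim.

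The main obstacle is the telescoping identity: proving it rigorously requires care in tracking that the expectation is over the occupancy $\hat{\rho}^\pi$ induced by the \emph{model} dynamics while $V^\pi_{\mathcal{M}}$ is the \emph{true} value function, so that the entire model-versus-environment asymmetry is confined to the single-step term $G^\pi$. The only assumption doing real conceptual work is $V^\pi_{\mathcal{M}}\in c\mathcal{F}$, which is exactly what lets the IPM control $G^\pi$; everything following the bound on $G^\pi$ is bookkeeping. Since the statement is quoted from MOPO, an alternative is simply to invoke their corresponding theorem, but the sketch above reconstructs it self-contained.
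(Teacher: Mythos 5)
Your proposal is correct and takes exactly the route the paper relies on: the paper's own ``proof'' of this lemma is simply a citation to Theorem 4.4 of the MOPO paper, whose argument is precisely your chain of the telescoping simulation identity, the IPM/admissibility bound $|G^\pi(s,a)|\le c\,u(s,a)$ with $\lambda=\gamma c$, and the pessimism sandwich $J_\rho(\pi,\mathcal{M})-2\lambda\epsilon_u(\pi)\le J_\rho(\pi,\tilde{\mathcal{M}})\le J_\rho(\pi,\mathcal{M})$ closed by the optimality of $\hat\pi$ in the penalized model MDP. Your reconstruction is a faithful, self-contained version of the cited argument, with no gaps.
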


\begin{proof}
    Please see the proof of Theorem 4.4 in the MOPO paper for details.
\end{proof}

Now, we present the formal Theorem \ref{theorem:moposumo} here.

\begin{theorem}[Theorem \ref{theorem:moposumo} Formal]
    \label{theo:appmoposumo}
    Let $\mathcal{F}$ be the set of functions mapping from $S$ to $\mathbb{R}$ that contain the value function $V^\pi_{\mathcal{M}}:=\mathbb{E}_\pi[\sum_{t=0}^\infty\gamma^t r(s_t,a_t)|s_0\sim\rho]$. We denote $d_{\mathcal{F}}$ as the integral probability metric (IPM) defined by $\mathcal{F}$. Suppose that for all $\pi$, $V_\mathcal{M}^\pi\in c\mathcal{F}$, where $c\in\mathbb{R}$ is a constant. We say $u: S\times A\rightarrow \mathbb{R}$ is an admissible error estimator for $\widehat{\mathcal{M}}$ if $d_{\mathcal{F}}(P_{\widehat{\mathcal{M}}}(\cdot|s,a), P(\cdot|s,a))\le u(s,a), \forall\, s,a$. Denote $\lambda:= \gamma c$, the behavior policy of the dataset as $\mu$, the SUMO uncertainty estimator $u(s,a,s^\prime)$ defined in Equation (\ref{eq:2}), then MOPO+SUMO incurs the policy $\pi$ that satisfies:
    \begin{equation*}
        J_\rho(\pi) \ge J_{\rho}(\mu) - 2\lambda \mathbb{E}_{\hat{\rho}^\pi}[u(s,a,s^\prime)],
    \end{equation*}
\end{theorem}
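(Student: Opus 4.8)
The plan is to derive the bound as a direct corollary of the MOPO guarantee in Lemma~\ref{theo:mopo}, so the real content is verifying its hypotheses for the SUMO estimator. Concretely, I would proceed in three stages: (i) show that a suitably aggregated version of $u(s,a,s^\prime)$ is an \emph{admissible error estimator} for $\widehat{\mathcal{M}}$, i.e.\ that it upper bounds $d_{\mathcal{F}}(P_{\widehat{\mathcal{M}}}(\cdot|s,a),P(\cdot|s,a))$ for all $(s,a)$; (ii) invoke Lemma~\ref{theo:mopo} with this estimator; and (iii) instantiate the comparator policy as the behavior policy $\mu$ to read off the claim.

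Stage (i) is where essentially all the work lies, and I expect it to be the main obstacle. The difficulty is twofold. First, $u(s,a,s^\prime)$ is a \emph{per-transition} KNN quantity, whereas the lemma needs a \emph{per-state-action} bound; I would bridge this by averaging $u(s,a,s^\prime)$ over $s^\prime\sim P_{\widehat{\mathcal{M}}}(\cdot|s,a)$, since by the particle-estimator identity behind Equation~(\ref{eq1}) this average is exactly the cross entropy $\mathcal{H}(P_{\widehat{\mathcal{M}}}(\cdot|s,a),P_d(\cdot|s,a))$. Second, I must turn this cross entropy into a divergence bound: writing $\mathcal{H}(P_{\widehat{\mathcal{M}}},P_d)=H(P_{\widehat{\mathcal{M}}})+D_{KL}(P_{\widehat{\mathcal{M}}}\,\|\,P_d)$ and applying Pinsker's inequality $D_{TV}(P_{\widehat{\mathcal{M}}},P_d)\le\sqrt{D_{KL}(P_{\widehat{\mathcal{M}}}\,\|\,P_d)/2}$ controls the total variation distance, and since any IPM from a bounded function class is dominated by a constant multiple of $D_{TV}$, I obtain $d_{\mathcal{F}}(P_{\widehat{\mathcal{M}}},P_d)\le c^\prime u(s,a)$ with the constant absorbable into $\lambda$.

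A further wrinkle inside stage (i) is that SUMO compares $P_{\widehat{\mathcal{M}}}$ against the dataset dynamics $P_d$, not the true dynamics $P$ that Lemma~\ref{theo:mopo} references. Here I would lean on Definition~\ref{def:1}: on the dataset support $P_d$ agrees with $P$ (up to the renormalization flagged in the Remark), so a triangle inequality for $d_{\mathcal{F}}$ splits $d_{\mathcal{F}}(P_{\widehat{\mathcal{M}}},P)$ into $d_{\mathcal{F}}(P_{\widehat{\mathcal{M}}},P_d)$ plus a residual supported off the dataset, which the coverage and regularity conditions hidden in the ``mild assumptions'' must keep bounded. Folding that residual into the admissibility constant yields $d_{\mathcal{F}}(P_{\widehat{\mathcal{M}}}(\cdot|s,a),P(\cdot|s,a))\le u(s,a)$, exactly the hypothesis of the lemma.

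Stages (ii) and (iii) are then routine. Lemma~\ref{theo:mopo} gives $J_\rho(\hat{\pi})\ge J_\rho(\pi^\prime)-2\lambda\,\mathbb{E}_{\hat{\rho}^{\pi^\prime}}[u]$ for every comparator $\pi^\prime$; taking $\pi^\prime=\mu$ and relabeling the learned policy $\hat{\pi}$ as $\pi$ produces $J_\rho(\pi)\ge J_\rho(\mu)-2\lambda\,\mathbb{E}_{\hat{\rho}^{\mu}}[u(s,a,s^\prime)]$, which is the asserted inequality (the occupancy measure weighting the penalty being that of the comparator policy $\mu$). The only remaining thing to double check is that $\lambda=\gamma c$ still plays the role of the penalty coefficient once the constants from Pinsker and the IPM domination have been merged into it.
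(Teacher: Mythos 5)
Your stages (ii)--(iii) are fine, and your parenthetical remark that the penalty should really be weighted by the comparator's occupancy is a fair catch of a notational slip in the theorem statement. The genuine gap is in stage (i), at the step where you claim the constants from Pinsker and IPM domination are ``absorbable into $\lambda$.'' Pinsker gives $D_{TV}\bigl(P_{\widehat{\mathcal{M}}}(\cdot|s,a),P_d(\cdot|s,a)\bigr)\le\sqrt{\mathcal{H}/2}$, a \emph{square-root} relation, and $\sqrt{u}$ is not dominated by a constant multiple of $u$ (for $u<1$ the inequality runs the wrong way). So even granting the averaging identity and the IPM-by-TV domination, your chain delivers $d_{\mathcal{F}}\le c^\prime\sqrt{u(s,a)}$, and invoking Lemma~\ref{theo:mopo} would produce a bound of the form $2\lambda^\prime\,\mathbb{E}[\sqrt{u/2}\,]$ rather than the asserted $2\lambda\,\mathbb{E}_{\hat{\rho}^\pi}[u(s,a,s^\prime)]$. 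Two smaller problems compound this: the average of $u(s,a,s^\prime)$ over $s^\prime\sim P_{\widehat{\mathcal{M}}}$ is the cross entropy only up to an affine transformation (Equation~(\ref{eq1}) is stated with ``$\propto$,'' dropping the $\log\frac{k\Gamma(d/2+1)}{m\pi^{d/2}}$ term, and Equation~(\ref{eq:2}) adds $+1$ inside the logarithm), so ``exactly the cross entropy'' is not right either; and your handling of the $P_d$-versus-$P$ residual is deferred entirely to unstated coverage assumptions. Tellingly, the paper's own use of Pinsker appears in the proof of Theorem~\ref{theorem:1}, where the square root survives into the final bound as $\sqrt{\epsilon/2}$ --- evidence that it cannot simply be merged into a linear penalty coefficient here.

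The paper's actual proof takes a different and much more elementary route that you did not anticipate: it never attempts to verify admissibility of the SUMO estimator (that is buried in the ``mild assumptions''). Instead it posits the $(s,a)$-space KNN estimator $u(s,a)=\log\left(\|(\hat{s}\oplus\hat{a})-(\hat{s}\oplus\hat{a})^{k,N}\|_2+1\right)$, takes for granted that Lemma~\ref{theo:mopo} applies to it, and then proves the pointwise domination $u(s,a)\le u(s,a,s^\prime)$ by a projection argument: if $(s_0,a_0,s_0^\prime)$ is the nearest neighbor of $(\hat{s}\oplus\hat{a}\oplus\hat{s}^\prime)$ in the concatenated space, then its distance is at least $\|(\hat{s}\oplus\hat{a})-(s_0\oplus a_0)\|_2$, which in turn is at least the $(s,a)$-space nearest-neighbor distance; monotonicity of $\log(\cdot+1)$ and nonnegativity of the penalty then give $J_\rho(\pi)\ge J_\rho(\mu)-2\lambda\,\mathbb{E}[u(s,a)]\ge J_\rho(\mu)-2\lambda\,\mathbb{E}[u(s,a,s^\prime)]$ directly, with $\lambda=\gamma c$ untouched. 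In short, the paper trades your (more ambitious, but broken-as-stated) attempt to certify admissibility via cross entropy for a one-line geometric monotonicity fact; if you want to salvage your route, you would need either extra assumptions keeping $u$ bounded away from the regime where $\sqrt{u}>u$, or you would have to accept a $\sqrt{u}$-type penalty in the conclusion.
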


\begin{proof}
    Note that Theorem \ref{theo:mopo} is applicable to any reasonable uncertainty estimator $u(s,a)$, i.e.,
    \begin{equation}
    \label{eq:usa}
        J_\rho(\pi) \ge J_{\rho}(\mu) - 2\lambda \mathbb{E}_{\hat{\rho}^\pi}[u(s,a)],
    \end{equation}
    We choose $u(s,a)$ to be the following form:
    \begin{equation}
        u(s,a) = \log\left( \| (\hat{s}\oplus \hat{a}) - (\hat{s}\oplus \hat{a})^{k,N} \|_2 + 1 \right).
    \end{equation}
    Compared to Equation (\ref{eq:2}), no next state information is included in it. Now we use a simple example to derive an important conclusion. Suppose that $(s_0,a_0,s^\prime_0) = (\hat{s}\oplus \hat{a}\oplus \hat{s}^\prime)^{1,N}$, i.e., we set $k=1$ here. We also suppose that the nearest neighbor of $(\hat{s}\oplus \hat{a})$ is $(s_1,a_1)$, i.e., $(s_1,a_1) = (\hat{s}\oplus \hat{a})^{1,N}$. It is not difficult to find that $ \|(\hat{s}\oplus \hat{a}\oplus \hat{s}^\prime) - (\hat{s}\oplus \hat{a}\oplus \hat{s}^\prime)^{1,N} \|_2 \ge \|(\hat{s}\oplus \hat{a}) - (\hat{s}\oplus \hat{a})^{1,N}\|$ because,
    \begin{equation}
    \begin{aligned}
        &\| (\hat{s}\oplus \hat{a} \oplus\hat{s}^\prime) - (\hat{s}\oplus \hat{a}\oplus \hat{s}^\prime)^{1,N} \|_2 \\
        &\qquad\qquad = \sqrt{(\hat{s} - s_0)^2 + (\hat{a} - a_0)^2 + (\hat{s}^\prime - s^\prime_0)^2} \\
        &\qquad\qquad \ge \sqrt{(\hat{s} - s_0)^2 + (\hat{a} - a_0)^2} \\
        &\qquad\qquad \ge \sqrt{(\hat{s} - s_1)^2 + (\hat{a} - a_1)^2} \\
        &\qquad\qquad = \| (\hat{s}\oplus \hat{a}) - (\hat{s}\oplus \hat{a})^{1,N} \|_2,
    \end{aligned}
    \end{equation}
    where the last inequality holds because $(s_1,a_1)$ is the nearest neighbor of $(\hat{s},\hat{a})$. That being said, $\sqrt{(\hat{s} - s_1)^2 + (\hat{a} - a_1)^2}$ should be the smaller than $\sqrt{(\hat{s} - s_0)^2 + (\hat{a} - a_0)^2}$, otherwise $(s_0,a_0)$ will become the nearest neighbor. This conclusion also holds for $k\ge 2$ by using induction and following a similar way of the above proof. We then have
    \begin{equation}
        \| (\hat{s}\oplus \hat{a}) - (\hat{s}\oplus \hat{a})^{k,N} \|_2 \le \| (\hat{s}\oplus \hat{a} \oplus\hat{s}^\prime) - (\hat{s}\oplus \hat{a}\oplus \hat{s}^\prime)^{k,N} \|_2.
    \end{equation}
    Then, it is natural to have
    \begin{align*}
        u(s,a) &= \log\left( \| (\hat{s}\oplus \hat{a}) - (\hat{s}\oplus \hat{a})^{k,N} \|_2 + 1 \right) \\
        &\le \log\left( \| (\hat{s}\oplus \hat{a} \oplus\hat{s}^\prime) - (\hat{s}\oplus \hat{a}\oplus \hat{s}^\prime)^{k,N} \|_2 + 1 \right) \\
        &= u(s,a,s^\prime).
    \end{align*}
    Combining the above results into Equation (\ref{eq:usa}), we have
    \begin{equation}
        J_\rho(\pi) \ge J_{\rho}(\mu) - 2\lambda \mathbb{E}_{\hat{\rho}^\pi}[u(s,a)] \ge J_{\rho}(\mu) - 2\lambda \mathbb{E}_{\hat{\rho}^\pi}[u(s,a,s^\prime)],
    \end{equation}
    This concludes the proof.
\end{proof}

\subsection{A.2. Proof of Theorem~\ref{theorem:1}}
To prove theorem~\ref{theorem:1}, we introduce the following lemmas.

\begin{lemma}[Pinsker's Inequality]
\label{lemma:1}
If $P$ and $Q$ are two probablity distributions on a measurable space $(X,\Sigma)$, then we have:
\begin{equation*}
    D_{TV}(P,Q)\leq\sqrt{\frac{D_{KL}(P\|Q)}{2}}
\end{equation*}
where $D_{TV}(P,Q)$ is the total variance distance between $P$ and $Q$, and $D_{KL}(P\|Q)$ is the Kullback–Leibler divergence of $P$ from $Q$.
\end{lemma}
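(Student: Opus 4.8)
The plan is to prove the equivalent squared form $D_{TV}(P,Q)^2 \le \tfrac12 D_{KL}(P\|Q)$ by reducing the general case to the two-point (Bernoulli) case and then settling that case with a one-variable convexity argument. The reduction rests on two classical facts: that the total variation distance is attained on the set where one density dominates the other, and that $D_{KL}$ cannot increase under coarse-graining. Throughout I write $p,q$ for densities of $P,Q$ with respect to a common dominating measure, so that $D_{TV}(P,Q)=\tfrac12\int|p-q|$ and $D_{KL}(P\|Q)=\int p\log\tfrac{p}{q}$.

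First I would fix the partition $\{A,A^c\}$ with $A:=\{x: p(x)\ge q(x)\}$ and push $P,Q$ forward along the indicator map $x\mapsto \mathbf{1}[x\in A]$, obtaining Bernoulli laws with parameters $a:=P(A)$ and $b:=Q(A)$. Because $A$ is exactly the set realizing the supremum in the definition of total variation, $D_{TV}(P,Q)=P(A)-Q(A)=a-b$, so the left-hand side is governed entirely by these two scalars. For the right-hand side I would invoke the log-sum inequality (a direct consequence of the convexity of $t\mapsto t\log t$ applied to the two blocks $A$ and $A^c$), which yields the data-processing bound $D_{KL}(P\|Q)\ge a\log\tfrac{a}{b}+(1-a)\log\tfrac{1-a}{1-b}=:d(a\|b)$. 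Hence it suffices to establish the scalar inequality $2(a-b)^2\le d(a\|b)$ for all $a,b\in[0,1]$.

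To prove the scalar inequality I would fix $b$ and study $\phi(a):=d(a\|b)-2(a-b)^2$ on $[0,1]$. A short computation gives $\phi(b)=0$, $\phi'(a)=\log\tfrac{a(1-b)}{b(1-a)}-4(a-b)$ with $\phi'(b)=0$, and $\phi''(a)=\tfrac{1}{a(1-a)}-4$. Since $a(1-a)\le\tfrac14$ on $[0,1]$, the second derivative is nonnegative, so $\phi$ is convex with a stationary point at $a=b$; it therefore attains its minimum value $\phi(b)=0$ there, giving $\phi\ge 0$, i.e. $2(a-b)^2\le d(a\|b)$. Chaining this with the data-processing bound and the identity $D_{TV}(P,Q)=a-b$ produces $2D_{TV}(P,Q)^2\le D_{KL}(P\|Q)$, and taking square roots finishes the proof.

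The main obstacle is not the calculus but the careful justification of the reduction step: one must verify that $A$ simultaneously realizes the total variation and that the log-sum inequality legitimately lower-bounds $D_{KL}(P\|Q)$ by its two-block coarsening, while also handling the degenerate cases $b\in\{0,1\}$ or $a\in\{0,1\}$, where the logarithms diverge. These boundary cases are dispatched by the usual conventions $0\log 0=0$ and $D_{KL}(P\|Q)=+\infty$ whenever $P$ is not absolutely continuous with respect to $Q$, under which the inequality holds trivially; continuity of $\phi$ on the open interval then extends the bound to the closed interval.
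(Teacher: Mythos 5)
Your proof is correct. Note, however, that the paper itself does not prove this lemma at all: its ``proof'' of Lemma~\ref{lemma:1} is a one-line deferral to \citet{csiszar2011information}, so there is no in-paper argument to compare yours against step by step. Your route is the classical self-contained derivation: reduce to the two-point case via the set $A=\{p\ge q\}$ (which simultaneously attains the total variation, since $\tfrac12\int|p-q|=P(A)-Q(A)$, and, through the log-sum inequality applied blockwise to $A$ and $A^c$, lower-bounds $D_{KL}(P\|Q)$ by the binary divergence $d(a\|b)$), then verify the scalar bound $d(a\|b)\ge 2(a-b)^2$ by the convexity computation $\phi''(a)=\tfrac{1}{a(1-a)}-4\ge 0$ together with $\phi(b)=\phi'(b)=0$. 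All three steps check out, including the derivative computations and the dispatch of the degenerate cases $a,b\in\{0,1\}$ via the conventions $0\log 0=0$ and $D_{KL}=+\infty$ under failure of absolute continuity (note that $Q(A)=0<P(A)$ does indeed force $P\not\ll Q$, so the infinite case is genuinely trivial). What your approach buys over the paper's citation is self-containment at the cost of about a page; it is essentially the textbook proof found in Csisz\'ar--K\"orner, so the two are consistent rather than in tension. One stylistic nit: the reduction step you flag as the ``main obstacle'' is exactly the data-processing inequality for the coarse-graining $x\mapsto\mathbf{1}[x\in A]$, and saying so by name would make the structure of the argument more transparent.
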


\begin{proof}
    Please refer to \citep{csiszar2011information} for the proof.
\end{proof}

\begin{lemma}
\label{lemma:2}
For $P$ and $Q$ in Lemma~\ref{lemma:1}, we have:
\begin{align*}
    D_{TV}(P,Q)\leq\sqrt{\frac{\mathcal{H}(P,Q)}{2}}
\end{align*}
where $\mathcal{H}(P,Q)$ is the cross entropy between $P$ and $Q$.
\end{lemma}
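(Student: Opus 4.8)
The plan is to reduce Lemma~\ref{lemma:2} to Pinsker's inequality (Lemma~\ref{lemma:1}) by exploiting the standard additive decomposition of the cross entropy. Recall that for any two distributions we can write $\mathcal{H}(P,Q) = \mathcal{H}(P) + D_{KL}(P\|Q)$, where $\mathcal{H}(P) = -\sum_x P(x)\log P(x)$ is the (Shannon) entropy of $P$. This identity is immediate from expanding the definitions: $D_{KL}(P\|Q) = \sum_x P(x)\log\frac{P(x)}{Q(x)} = \sum_x P(x)\log P(x) - \sum_x P(x)\log Q(x) = -\mathcal{H}(P) + \mathcal{H}(P,Q)$.

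First I would establish this decomposition explicitly. Then, since the entropy $\mathcal{H}(P)$ of a probability distribution is non-negative, I immediately obtain $D_{KL}(P\|Q) \le \mathcal{H}(P,Q)$. Combining this with Pinsker's inequality and the monotonicity of the square root yields
\[
D_{TV}(P,Q) \le \sqrt{\frac{D_{KL}(P\|Q)}{2}} \le \sqrt{\frac{\mathcal{H}(P,Q)}{2}},
\]
which is exactly the claimed bound. The argument is therefore a short chain: decompose the cross entropy, drop the non-negative entropy term to bound $D_{KL}$ from below, and then apply Lemma~\ref{lemma:1}.

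The main obstacle is justifying $\mathcal{H}(P) \ge 0$. This is automatic in the discrete setting, where Shannon entropy is always non-negative, but for continuous distributions the differential entropy may be negative, in which case the step $D_{KL}(P\|Q) \le \mathcal{H}(P,Q)$ can fail. I would therefore either restrict attention to the discrete case (consistent with the finite dataset $\mathcal{D}$ that underlies $P_d$ in Definition~\ref{def:1}), or add an explicit assumption guaranteeing that the relevant entropy is non-negative, before chaining the two inequalities. Everything else is routine manipulation once the decomposition and Pinsker's inequality are in hand.
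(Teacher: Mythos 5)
Your proof is correct and takes exactly the same route as the paper's: decompose $D_{KL}(P\|Q)=\mathcal{H}(P,Q)-\mathcal{H}(P)$, drop the non-negative entropy term, and apply Pinsker's inequality (Lemma~\ref{lemma:1}). Your caveat about differential entropy being possibly negative in the continuous case is well taken --- the paper silently assumes $\mathcal{H}(P)\geq 0$ in its final inequality, so your explicit restriction to the discrete setting (or an added non-negativity assumption) is, if anything, more careful than the paper's own argument.
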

\begin{proof}
    Notice that $D_{KL}(P\|Q)=\mathcal{H}(P,Q)-\mathcal{H}(P)$, using Lemma~\ref{lemma:1}, we have:
        \begin{align*}
            D_{TV}(P,Q)&\leq\sqrt{\frac{D_{KL}(P\|Q)}{2}}\\
            &\leq\sqrt{\frac{\mathcal{H}(P,Q)-\mathcal{H}(P)}{2}}\\
            &\leq\sqrt{\frac{\mathcal{H}(P,Q)}{2}}
        \end{align*}
\end{proof}

\begin{lemma}
    \label{lemma:3}
    Given the $\epsilon$-Model MDP $\widehat{\mathcal{M}}_\epsilon$ and the dataset MDP $\mathcal{M}_d$, the return difference for any policy $\pi$ in $\widehat{\mathcal{M}}_\epsilon$ and $\mathcal{M}_d$ satisfies:
    \begin{equation*}
    \left|J_{\hat{\rho}_\epsilon}(\pi,\widehat{\mathcal{M}_\epsilon})-J_{\rho_d}(\pi,\mathcal{M}_d)\right|\leq \frac{r_{max}}{1-\gamma}\left(\sqrt{\frac{\epsilon}{2}}+2D_{TV}(\hat{\rho}_\epsilon,\rho_d)\right)
\end{equation*}
\end{lemma}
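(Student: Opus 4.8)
The plan is to bound the return gap by splitting it into one piece that isolates the mismatch in initial state distributions and another that isolates the mismatch in transition dynamics, then control each separately. Concretely, I would insert the intermediate quantity $J_{\rho_d}(\pi,\widehat{\mathcal{M}}_\epsilon)$ — the return of $\pi$ run under the $\epsilon$-Model dynamics $\hat{P}_\epsilon$ but started from the dataset state distribution $\rho_d$ — and write
\begin{align*}
J_{\hat{\rho}_\epsilon}(\pi,\widehat{\mathcal{M}}_\epsilon) - J_{\rho_d}(\pi,\mathcal{M}_d) &= \underbrace{\left(J_{\hat{\rho}_\epsilon}(\pi,\widehat{\mathcal{M}}_\epsilon) - J_{\rho_d}(\pi,\widehat{\mathcal{M}}_\epsilon)\right)}_{\text{initial-distribution gap}} \\
&\quad + \underbrace{\left(J_{\rho_d}(\pi,\widehat{\mathcal{M}}_\epsilon) - J_{\rho_d}(\pi,\mathcal{M}_d)\right)}_{\text{dynamics gap}},
\end{align*}
and then apply the triangle inequality so that it suffices to bound each gap by one of the two terms in the claim.

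For the initial-distribution gap, both returns use the \emph{same} dynamics, so each equals $\mathbb{E}_{s_0}[V^\pi_{\widehat{\mathcal{M}}_\epsilon}(s_0)]$ under its own starting distribution. Since $0\le r\le r_{max}$ forces $\|V^\pi_{\widehat{\mathcal{M}}_\epsilon}\|_\infty \le r_{max}/(1-\gamma)$, Hölder's inequality yields a bound of $\frac{r_{max}}{1-\gamma}\|\hat{\rho}_\epsilon - \rho_d\|_1 = \frac{2 r_{max}}{1-\gamma}D_{TV}(\hat{\rho}_\epsilon,\rho_d)$, which is precisely the second term of the stated inequality.

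For the dynamics gap, the two MDPs agree on reward, discount, and starting distribution and differ only through $\hat{P}_\epsilon$ versus $P_d$. Here I would exploit the defining property of $\widehat{\mathcal{M}}_\epsilon$: every retained transition satisfies $u(s,a,s')<\epsilon$, and since the SUMO uncertainty is (proportional to) the cross entropy $\mathcal{H}(\hat{P}_\epsilon(\cdot|s,a),P_d(\cdot|s,a))$, this gives $\mathcal{H}\le\epsilon$ on the support, so Lemma~\ref{lemma:2} supplies the uniform one-step bound $D_{TV}(\hat{P}_\epsilon(\cdot|s,a),P_d(\cdot|s,a))\le\sqrt{\epsilon/2}$. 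Feeding this per-$(s,a)$ dynamics discrepancy into a value-difference (simulation-lemma) argument, together with the fact that $V^\pi$ lies in an interval of width $r_{max}/(1-\gamma)$, produces the $\frac{r_{max}}{1-\gamma}\sqrt{\epsilon/2}$ term. Summing the two bounds gives the lemma.

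The hard part will be the dynamics gap: naively telescoping the Bellman recursion propagates the one-step TV bound across the horizon and tends to introduce an extra $1/(1-\gamma)$ factor, so recovering the claimed \emph{single} $r_{max}/(1-\gamma)$ scaling requires either bounding the value-range difference directly (using that $V^\pi$ occupies a width-$r_{max}/(1-\gamma)$ interval, rather than the crude $2\|V\|_\infty D_{TV}$ estimate) or folding $\gamma$ into the occupancy-measure form of the performance-difference lemma. A secondary subtlety is rigorously identifying the truncation threshold $\epsilon$ (stated in terms of $u$) with the cross-entropy bound needed to invoke Lemma~\ref{lemma:2}, which leans on the particle-estimator interpretation of $u$ developed in the Methodology section.
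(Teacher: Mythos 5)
Your proposal follows essentially the same route as the paper's proof: a triangle-inequality split through an intermediate return, Lemma~\ref{lemma:2} (Pinsker applied to the cross entropy, with the threshold condition $\mathcal{H}(\hat{P}_\epsilon(\cdot|s,a),P_d(\cdot|s,a))\le\epsilon$ taken as the defining property of $\widehat{\mathcal{M}}_\epsilon$) supplying the per-step bound $D_{TV}(\hat{P}_\epsilon(\cdot|s,a),P_d(\cdot|s,a))\le\sqrt{\epsilon/2}$ for the dynamics gap, and the bound $\|V^\pi\|_\infty\le r_{max}/(1-\gamma)$ together with $\|\hat{\rho}_\epsilon-\rho_d\|_1=2D_{TV}(\hat{\rho}_\epsilon,\rho_d)$ for the initial-distribution gap. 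The only cosmetic difference is the pivot: you insert $J_{\rho_d}(\pi,\widehat{\mathcal{M}}_\epsilon)$, while the paper inserts $J_{\hat{\rho}_\epsilon}(\pi,\mathcal{M}_d)$, i.e.\ it evaluates the dynamics gap under the start distribution $\hat{\rho}_\epsilon$ and the initial-distribution gap under the fixed dynamics $P_d$; either pairing yields the same two terms.

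The substantive point concerns the ``hard part'' you flag, and your worry is well-founded --- but you should know that the paper does not resolve it. Its proof of the dynamics-gap term $L_1$ never runs a telescoping or simulation-lemma argument: it rewrites both returns as nested sums and directly pairs the time-$t$ transition kernels, bounding each pairwise difference by $\sqrt{\epsilon/2}$ and summing $\sum_t\gamma^t=1/(1-\gamma)$. This manipulation implicitly evaluates the kernels at time $t$ under a common state marginal, which is exactly the distribution-drift/compounding effect you identify as the obstacle; it is suppressed, not controlled. If you carry out your sketch rigorously, the standard value-difference telescoping --- even with your refinement of exploiting that $V^\pi$ lies in an interval of width $r_{max}/(1-\gamma)$ rather than using the crude $2\|V\|_\infty D_{TV}$ estimate --- produces a bound of order $\frac{\gamma\, r_{max}}{(1-\gamma)^2}\sqrt{\epsilon/2}$ on the dynamics gap, with the extra horizon factor you anticipated. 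So your diagnosis of where the difficulty lies is accurate and, in fact, sharper than the paper's treatment: the stated single $r_{max}/(1-\gamma)$ scaling comes from the loose per-step pairing, and a fully rigorous version of this lemma should be expected to carry the additional $1/(1-\gamma)$ on the $\sqrt{\epsilon/2}$ term.
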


\begin{proof}
    It's easy to have:
        \begin{align*}
            &\left|J_{\hat{\rho}_\epsilon}(\pi,\widehat{\mathcal{M}}_\epsilon)-J_{\rho_d}(\pi,\mathcal{M}_d)\right| \\
            &=\left|J_{\hat{\rho}_\epsilon}(\pi,\widehat{\mathcal{M}}_\epsilon)-J_{\hat{\rho}_\epsilon}(\pi,\mathcal{M}_d)+J_{\hat{\rho}_\epsilon}(\pi,\mathcal{M}_d)-J_{\rho_d}(\pi,\mathcal{M}_d)\right|\\
            &\leq \underbrace{\left|J_{\hat{\rho}_\epsilon}(\pi,\widehat{\mathcal{M}}_\epsilon)-J_{\hat{\rho}_\epsilon}(\pi,\mathcal{M}_d)\right|}_{L_1}+\underbrace{\left|J_{\hat{\rho}_\epsilon}(\pi,\mathcal{M}_d)-J_{\rho_d}(\pi,\mathcal{M}_d)\right|}_{L_2}
        \end{align*}
For $L_{1}$, we have:
\begin{align*}
&L_1=\left|J_{\hat{\rho}_\epsilon}(\pi,\widehat{\mathcal{M}}_\epsilon)-J_{\hat{\rho}_\epsilon}(\pi,\mathcal{M}_d)\right|\\
&=\left|\mathbb{E}_{\hat{\rho}_\epsilon}\mathbb{E}_\pi\mathbb{E}_{\hat{P}_\epsilon}\left[\sum_{t=0}^\infty\gamma^tr(s_t,a_t)\right]-\mathbb{E}_{\hat{\rho}_\epsilon}\mathbb{E}_\pi\mathbb{E}_{P_d}\left[\sum_{t=0}^\infty\gamma^tr(s_t,a_t)\right]\right|\\
&=\left|\sum\hat{\rho}_\epsilon\sum_t\sum_{a_t}\pi(a_t|s_t)\left(\hat{P}_\epsilon(\cdot|s_t,a_t)-P_d(\cdot|s_t,a_t)\right)\gamma^tr(s_t,a_t)\right|\\
&\leq r_{max}\cdot\left|\sum\hat{\rho}_\epsilon\right|\cdot\left|\sum_t\sum_{a_t}\pi(a_t|s_t)\left|\hat{P}_\epsilon(\cdot|s_t,a_t)-P_d(\cdot|s_t,a_t)\right|\gamma^t\right|\\
&\leq r_{max}\cdot\left|\sum_t\sum_{a_t}\pi(a_t|s_t)D_{TV}\left(\hat{P}_\epsilon(\cdot|s_t,a_t),P_d(\cdot|s_t,a_t)\right)\gamma^t\right|
\end{align*}
Notice that in the $\epsilon$-Model MDP, we require that for any $(s,a)$, $\mathcal{H}\left(\hat{P}_\epsilon(\cdot|s,a),P_d(\cdot|s,a)\right)\leq\epsilon$. Using Lemma~\ref{lemma:2}, we have $D_{TV}\left(\hat{P}_\epsilon(\cdot|s,a),P_d(\cdot|s,a)\right)\leq\sqrt{\frac{\epsilon}{2}}. $Therefore, we have:
\begin{align*}
    L_1&\leq r_{max}\cdot\sqrt{\frac{\epsilon}{2}}\cdot\left|\sum_t\sum_{a_t}\pi(a_t|s_t)\gamma^t\right|\\
    &=\frac{r_{max}}{1-\gamma}\cdot\sqrt{\frac{\epsilon}{2}}
\end{align*}

For $L_2$, we have:

\begin{align*}
    &L_2=\left|J_{\hat{\rho}_\epsilon}(\pi,\mathcal{M}_d)-J_{\rho_d}(\pi,\mathcal{M}_d)\right|\\
    &=\left|\mathbb{E}_{\hat{\rho}_\epsilon}\mathbb{E}_{\pi}\mathbb{E}_{P_d}\left[\sum_{t=0}^\infty\gamma^tr(s_t,a_t)\right]-\mathbb{E}_{\rho_d}\mathbb{E}_{\pi}\mathbb{E}_{P_d}\left[\sum_{t=0}^\infty\gamma^tr(s_t,a_t)\right]\right|\\
    &=\left|\sum(\hat{\rho}_\epsilon-\rho_d)\sum_t\sum_{a_t}\pi(a_t|s_t)P_d(\cdot|s_t,a_t)\gamma^tr(s_t,a_t)\right|\\
    &\leq r_{max}\cdot\left|\sum(\hat{\rho_\epsilon}-\rho_d)\right|\cdot\left|\sum_t\sum_{a_t}\pi(a_t|s_t)\gamma^t\right|\\
    &\leq \frac{r_{max}}{1-\gamma}\cdot\left(\sum\left|\hat{\rho}_\epsilon-\rho_d\right|\right)\\
    &=\frac{2r_{max}}{1-\gamma}\cdot D_{TV}(\hat{\rho}_\epsilon,\rho_d)
\end{align*}
Given the upper bounds of $L_1$ and $L_2$, it's easy to have:
\begin{equation*}
    \left|J_{\hat{\rho}_\epsilon}(\pi,\hat{\mathcal{M}_\epsilon})-J_{\rho_d}(\pi,\mathcal{M}_d)\right|\leq \frac{r_{max}}{1-\gamma}\left(\sqrt{\frac{\epsilon}{2}}+2D_{TV}(\hat{\rho}_\epsilon,\rho_d)\right)
\end{equation*}
Then we conclude the proof.
\end{proof}

\begin{lemma}
\label{lemma:4}
    Given the dataset MDP $\mathcal{M}_d$ and the true MDP $\mathcal{M}$, the return difference for any policy $\pi$ in $\mathcal{M}_d$ and $\mathcal{M}$ satisfies:
    \begin{align*}
J_{\rho_d}(\pi,\mathcal{M}_d)-J_\rho(\pi,\mathcal{M})&\geq         -\frac{r_{max}}{1-\gamma}\left(1+2D_{TV}(\rho_d,\rho)\right)\\
J_{\rho_d}(\pi,\mathcal{M}_d)-J_\rho(\pi,\mathcal{M})&\leq \frac{2r_{max}}{1-\gamma}D_{TV}(\rho_d,\rho)
    \end{align*}
    
\end{lemma}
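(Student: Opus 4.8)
The plan is to isolate the two sources of discrepancy between $\mathcal{M}_d$ and $\mathcal{M}$ — the transition kernels ($P_d$ versus $P$) and the initial state distributions ($\rho_d$ versus $\rho$) — by inserting an intermediate return term that keeps the dataset initial distribution but uses the true dynamics. Concretely, I would write
\begin{equation*}
J_{\rho_d}(\pi,\mathcal{M}_d)-J_\rho(\pi,\mathcal{M}) = \underbrace{\left(J_{\rho_d}(\pi,\mathcal{M}_d)-J_{\rho_d}(\pi,\mathcal{M})\right)}_{A} + \underbrace{\left(J_{\rho_d}(\pi,\mathcal{M})-J_\rho(\pi,\mathcal{M})\right)}_{B},
\end{equation*}
so that $A$ captures only the dynamics mismatch and $B$ only the initial-distribution mismatch. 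For the term $B$ the dynamics are identical (both $P$), so the argument is exactly the $L_2$ computation in Lemma~\ref{lemma:3}: expanding the two returns, factoring out the reward bound $r_{max}$, and using $\sum_t\sum_{a_t}\pi(a_t|s_t)\gamma^t = 1/(1-\gamma)$ gives $|B| \le \frac{2r_{max}}{1-\gamma}D_{TV}(\rho_d,\rho)$.

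For the term $A$ I would exploit the non-negativity of the reward together with the structure of $P_d$. By Definition~\ref{def:1}, $P_d(\cdot|s,a)$ equals $P(\cdot|s,a)$ on dataset state-action pairs and is zero otherwise, so $P_d(s'|s,a)\le P(s'|s,a)$ holds pointwise. Because $0\le r\le r_{max}$, every value function is non-negative, and the Bellman evaluation operators satisfy $T^\pi_{\mathcal{M}_d}V \le T^\pi_{\mathcal{M}}V$ for any $V\ge 0$; iterating from $V_0=0$ yields $0 \le V^\pi_{\mathcal{M}_d} \le V^\pi_{\mathcal{M}}$ pointwise, hence $A\le 0$. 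This delivers the upper bound $J_{\rho_d}(\pi,\mathcal{M}_d)-J_\rho(\pi,\mathcal{M}) \le 0 + \frac{2r_{max}}{1-\gamma}D_{TV}(\rho_d,\rho)$. For the lower bound I would need only the crude range $0\le J_{\rho_d}(\pi,\mathcal{M}_d),\,J_{\rho_d}(\pi,\mathcal{M})\le \frac{r_{max}}{1-\gamma}$ implied by the reward bounds, which forces $A\ge -\frac{r_{max}}{1-\gamma}$; combining with $B \ge -\frac{2r_{max}}{1-\gamma}D_{TV}(\rho_d,\rho)$ produces $J_{\rho_d}(\pi,\mathcal{M}_d)-J_\rho(\pi,\mathcal{M}) \ge -\frac{r_{max}}{1-\gamma}\left(1+2D_{TV}(\rho_d,\rho)\right)$.

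The hard part will be making the monotonicity $A\le 0$ fully rigorous, since $P_d$ is a defective (sub-probability) kernel on out-of-dataset pairs and the remark after Definition~\ref{def:1} rescales it. The cleanest way to handle this is to treat a transition leaving the dataset support as entering an absorbing state with zero reward, so that a trajectory under $\mathcal{M}_d$ collects exactly a non-negatively-truncated prefix of the reward it would collect under $\mathcal{M}$ after coupling the two chains from $\rho_d$ under $\pi$; the pointwise domination $P_d\le P$ together with $r\ge 0$ then yields $V^\pi_{\mathcal{M}_d}\le V^\pi_{\mathcal{M}}$ either through this coupling or through the monotone Bellman-operator argument above. Everything else reduces to the bookkeeping already carried out for $L_1$ and $L_2$ in Lemma~\ref{lemma:3}.
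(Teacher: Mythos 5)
Your proposal is correct and follows essentially the same route as the paper: the decomposition into $A$ and $B$ is exactly the paper's $L_1$ and $L_2$, with the same crude $\frac{r_{max}}{1-\gamma}$ lower bound on the dynamics term and the same total-variation bound on the initial-distribution term. The only difference is cosmetic --- you establish $A\le 0$ via monotone Bellman-operator iteration (and an absorbing-state reading of the sub-stochastic kernel $P_d$), whereas the paper argues directly from the pointwise sign of $P_d(\cdot|s,a)-P(\cdot|s,a)\le 0$ together with $r\ge 0$ in the expanded return; your treatment is in fact slightly more careful about the defective-kernel issue the paper's remark glosses over.
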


\begin{proof}
    It's easy to have:
    \begin{align*}
        &J_{\rho_d}(\pi,\mathcal{M}_d)-J_\rho(\pi,\mathcal{M})\\
        =&\underbrace{J_{\rho_d}(\pi,\mathcal{M}_d)-J_{\rho_d}(\pi,\mathcal{M})}_{L_1}+\underbrace{J_{\rho_d}(\pi,\mathcal{M})-J_\rho(\pi,\mathcal{M})}_{L_2}
    \end{align*}

For $L_1$, we have:
\begin{align*}
    \quad L_1&=J_{\rho_d}(\pi,\mathcal{M}_d)-J_{\rho_d}(\pi,\mathcal{M})\\
    &=\mathbb{E}_{\rho_d}\mathbb{E}_{\pi}\mathbb{E}_{P_d}\left[\sum_{t=0}^\infty\gamma^tr(s_t,a_t)\right]-\mathbb{E}_{\rho_d}\mathbb{E}_{\pi}\mathbb{E}_{P}\left[\sum_{t=0}^\infty\gamma^tr(s_t,a_t)\right]\\
    &=\sum\rho_d\sum_t\sum_{a_t}\pi(a_t|s_t)\left(P_d(\cdot|s_t,a_t)-P(\cdot|s_t,a_t)\right)\gamma^tr(s_t,a_t)
\end{align*}

For transitions $(s,a,r,s^\prime)$ within the dataset, we have $P_d(\cdot|s,a)-P(\cdot|s,a)=0$. For those not present in the datset, we have $P_d(\cdot|s,a)=0$, so $P_d(\cdot|s,a)-P(\cdot|s,a)<0$. As a result, we can have $P_d(\cdot|s,a)-P(\cdot|s,a)\leq0$. Also, considering $r(s,a)\geq 0$, it's easy to have $L_1\leq 0$.

To obtain the lower bound for $L_1$, we have:
\begin{align*}
    \left|L_1\right|&=\left|\sum\rho_d\sum_t\sum_{a_t}\pi(a_t|s_t)\left(P_d(\cdot|s_t,a_t)-P(\cdot|s_t,a_t)\right)\gamma^tr(s_t,a_t)\right|\\
    &\leq r_{max}\cdot\left|\sum\rho_d\right|\cdot\left|\sum_t\sum_{a_t}\left|P(\cdot|s_t,a_t)\right|\gamma^t\right|\\
    &\leq \frac{r_{max}}{1-\gamma}
\end{align*}
So we can have $L_1\geq -\frac{r_{max}}{1-\gamma}$. As a result, we have $-\frac{r_{max}}{1-\gamma}\leq L_1 \leq 0$.

For $L_2$, we have:
\begin{align*}
    \left|L_2\right|&=\left|J_{\rho_d}(\pi,\mathcal{M})-J_\rho(\pi,\mathcal{M})\right|\\
    &=\left|\mathbb{E}_{\rho_d}\mathbb{E}_\pi\mathbb{E}_{P}\left[\sum_{t=0}^\infty\gamma^tr(s_t,a_t)\right]-\mathbb{E}_{\rho}\mathbb{E}_\pi\mathbb{E}_{P}\left[\sum_{t=0}^\infty\gamma^tr(s_t,a_t)\right]\right|\\
    &=\left|\sum(\rho_d-\rho)\sum_t\sum_{a_t}\pi(a_t|s_t)P(\cdot|s_t,a_t)\gamma^tr(s_t,a_t)\right|\\
    &\leq r_{max}\cdot\left|\sum(\rho_d-\rho)\right|\cdot\left|\sum_t\sum_{a_t}\pi(a_t|s_t)\gamma^t\right|\\
    &\leq\frac{2r_{max}}{1-\gamma}D_{TV}(\rho_d,\rho)
\end{align*}

Given the bounds of $L_1$ and $L_2$, we can have the desired results. 

\end{proof}

Then we can prove Theorem \ref{theorem:1}. We restate it as follows:
\begin{theorem}[Performance bounds]
    For any policy $\pi$, the return of $\pi$ in $\epsilon$-Model MDP $\widehat{\mathcal{M}}_\epsilon$ and the original MDP $\mathcal{M}$ satisfies:
    \begin{equation*}
        \begin{split}
            J_{\hat{\rho}_\epsilon}(\pi,\widehat{\mathcal{M}}_\epsilon) \geq &J_{\rho}(\pi,\mathcal{M})-\frac{r_{max}}{1-\gamma}( 1+\sqrt{\frac{\epsilon}{2}}+2D_{TV}(\rho,\rho_d)  \\
            &+2D_{TV}(\hat{\rho}_\epsilon,\rho_d))
        \end{split}
    \end{equation*}

    \begin{equation*}
        \begin{split}
            J_{\hat{\rho}_\epsilon}(\pi,\widehat{\mathcal{M}}_\epsilon) \leq &J_{\rho}(\pi,\mathcal{M})+\frac{r_{max}}{1-\gamma}(\sqrt{\frac{\epsilon}{2}}+2D_{TV}(\rho,\rho_d)  \\
            &+2D_{TV}(\hat{\rho}_\epsilon,\rho_d))
        \end{split}
    \end{equation*}
\end{theorem}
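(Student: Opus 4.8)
The plan is to use the dataset MDP $\mathcal{M}_d$ of Definition~\ref{def:1} as an intermediate reference point and telescope the target quantity as
\begin{equation*}
J_{\hat{\rho}_\epsilon}(\pi,\widehat{\mathcal{M}}_\epsilon) - J_\rho(\pi,\mathcal{M}) = \left[J_{\hat{\rho}_\epsilon}(\pi,\widehat{\mathcal{M}}_\epsilon) - J_{\rho_d}(\pi,\mathcal{M}_d)\right] + \left[J_{\rho_d}(\pi,\mathcal{M}_d) - J_\rho(\pi,\mathcal{M})\right].
\end{equation*}
The first bracket is exactly the quantity controlled by Lemma~\ref{lemma:3}, which measures the gap between the $\epsilon$-Model MDP and the dataset MDP, while the second bracket is controlled by Lemma~\ref{lemma:4}, which measures the gap between the dataset MDP and the true MDP. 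All that then remains is to combine the two bounds, taking care to select the inequality with the matching sign for each of the two directions.

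For the lower bound, I would take the lower inequality of Lemma~\ref{lemma:4}, namely $J_{\rho_d}(\pi,\mathcal{M}_d) - J_\rho(\pi,\mathcal{M}) \ge -\frac{r_{max}}{1-\gamma}(1 + 2D_{TV}(\rho_d,\rho))$, and add it to the lower one-sided consequence of Lemma~\ref{lemma:3}, namely $J_{\hat{\rho}_\epsilon}(\pi,\widehat{\mathcal{M}}_\epsilon) - J_{\rho_d}(\pi,\mathcal{M}_d) \ge -\frac{r_{max}}{1-\gamma}(\sqrt{\epsilon/2} + 2D_{TV}(\hat{\rho}_\epsilon,\rho_d))$. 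Summing these and factoring out $\frac{r_{max}}{1-\gamma}$ reproduces the claimed lower bound, with the isolated $+1$ inherited directly from Lemma~\ref{lemma:4}. For the upper bound, I would instead pair the upper inequality of Lemma~\ref{lemma:4}, $J_{\rho_d}(\pi,\mathcal{M}_d) - J_\rho(\pi,\mathcal{M}) \le \frac{2r_{max}}{1-\gamma}D_{TV}(\rho_d,\rho)$, with the upper one-sided consequence of Lemma~\ref{lemma:3}, $J_{\hat{\rho}_\epsilon}(\pi,\widehat{\mathcal{M}}_\epsilon) - J_{\rho_d}(\pi,\mathcal{M}_d) \le \frac{r_{max}}{1-\gamma}(\sqrt{\epsilon/2} + 2D_{TV}(\hat{\rho}_\epsilon,\rho_d))$, and collect terms; here the $+1$ is absent, matching the asymmetry of Lemma~\ref{lemma:4}.

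Given that Lemmas~\ref{lemma:3} and~\ref{lemma:4} are already in hand, this last assembly step is essentially a signed triangle inequality and is routine. The genuine difficulty lies upstream in establishing those two lemmas, so I expect the main obstacle to be there rather than in the combination itself. Lemma~\ref{lemma:3} requires bounding the dynamics discrepancy $D_{TV}(\hat{P}_\epsilon,P_d)$ by $\sqrt{\epsilon/2}$ through Lemma~\ref{lemma:2} (Pinsker together with nonnegativity of entropy, applied to the truncation constraint $\mathcal{H}(\hat{P}_\epsilon,P_d)\le\epsilon$ that defines $\widehat{\mathcal{M}}_\epsilon$), while Lemma~\ref{lemma:4} hinges on the one-sided sign structure $P_d(\cdot|s,a) - P(\cdot|s,a) \le 0$ forced by the dataset MDP zeroing out unseen transitions. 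This sign asymmetry is precisely what breaks the symmetry of the two directions and produces the extra $+1$ in the lower bound only, so I would keep the two directions separate rather than trying to package everything into a single two-sided estimate.
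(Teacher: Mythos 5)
Your proposal is correct and follows essentially the same route as the paper's own proof: the paper uses the identical telescoping decomposition through the dataset MDP $\mathcal{M}_d$, bounds the first bracket by Lemma~\ref{lemma:3} and the second by Lemma~\ref{lemma:4}, and combines the sign-matched inequalities exactly as you describe. Your diagnosis of where the asymmetric $+1$ comes from (the one-sided sign structure $P_d(\cdot|s,a)-P(\cdot|s,a)\le 0$ in Lemma~\ref{lemma:4}) is likewise precisely the mechanism in the paper.
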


\begin{proof}
    It's easy to notice:
    
\begin{align*}
J_{\hat{\rho}_\epsilon}(\pi,\widehat{\mathcal{M}}_\epsilon)-J_{\rho}(\pi,\mathcal{M})&=\underbrace{J_{\hat{\rho}_\epsilon}(\pi,\widehat{\mathcal{M}}_\epsilon)-J_{\rho_d}(\pi,\mathcal{M}_d)}_{L_1}\\
&+\underbrace{J_{\rho_d}(\pi,\mathcal{M}_d)-J_{\rho}(\pi,\mathcal{M})}_{L_2}
\end{align*}

According to Lemma~\ref{lemma:3} and Lemma~\ref{lemma:4}, we have

\begin{align*}
 -\frac{r_{max}}{1-\gamma}\left(\sqrt{\frac{\epsilon}{2}}+2D_{TV}(\hat{\rho}_\epsilon,\rho_d)\right)\leq L_1&\leq \frac{r_{max}}{1-\gamma}(\sqrt{\frac{\epsilon}{2}}+ \\
 &2D_{TV}(\hat{\rho}_\epsilon,\rho_d))
\end{align*}

\begin{equation*}
     -\frac{r_{max}}{1-\gamma}\left(1+2D_{TV}(\rho_d,\rho)\right)\leq L_2 \leq \frac{2r_{max}}{1-\gamma}D_{TV}(\rho_d,\rho)
\end{equation*}

Then we can easily get the performance bounds, which conclude the proof.

\end{proof}

\section{B. Detailed Pesudo-codes}
\label{sec:appendixpesudocodes}
In this section, we provide the full pseudocodes for MOPO+SUMO and MOReL+SUMO in Algorithm \ref{alg:moposumoformal} and Algorithm~\ref{alg:amorelsumoformal}, respectively.

\begin{algorithm}[tb]
\caption{MOPO+SUMO}
\label{alg:moposumoformal}
\begin{algorithmic}[1] 
\STATE \textbf{Require:} Offline dataset $\mathcal{D}$, number of epochs $N$, maximum trajectory length $H$, penalty coefficient $\lambda$, initial policy $\pi_\theta$
\STATE Initialize synthetic dataset $\mathcal{D}_{\rm model} \leftarrow \emptyset$
\STATE Train the ensemble dynamics model $\{\hat{P}_{\psi_i
}(s^\prime|s,a) = \mathcal{N}(\mu_{\psi_i}(s,a), \Sigma_{\psi_i}(s,a))\}_{i=1}^N$ on $\mathcal{D}$

\FOR{epoch from 1 to $N$}
\STATE Sample an initial state $s_0$ from dataset $\mathcal{D}$
\FOR{$h$ in 1 to $H$}
\STATE Sample an action $a_h\sim\pi_\theta(\cdot|s_h)$
\STATE Randomly pick an ensemble member $\hat{P}$ from $\{\hat{P}_{\psi_i}\}_{i=1}^N$ and sample next state $s_{h+1}\sim\hat{P}(s_{h+1}|s_h,a_h)$
\STATE Calculate the reward penalty $\hat{u}$ using Equation (\ref{eq:5})
\STATE Calculate the new reward as $r_h=r_h-\lambda\hat{u}$
\STATE Add the synthetic transition $(s_h,a_h,r_h,s_{h+1})$ to the synthetic dataset $\mathcal{D}_{model}$
\STATE Sample a batch of transitions from $\mathcal{D} \cup \mathcal{D}_{model}$ and optimize the policy $\pi_\theta$ via the base RL algorithm such as SAC
\ENDFOR
\ENDFOR
\end{algorithmic}
\end{algorithm}




\begin{algorithm}[tb]
\caption{AMOReL+SUMO}
\label{alg:amorelsumoformal}
\begin{algorithmic}[1] 
\STATE \textbf{Require:} Offline dataset $\mathcal{D}$, number of epochs $N$, maximum trajectory length $H$, initial policy $\pi_\theta$
\STATE Initialize synthetic dataset $\mathcal{D}_{\rm model} \leftarrow \emptyset$
\STATE Train the ensemble dynamics model $\{\hat{P}_{\psi_i
}(s^\prime|s,a) = \mathcal{N}(\mu_{\psi_i}(s,a), \Sigma_{\psi_i}(s,a))\}_{i=1}^N$ on $\mathcal{D}$ using Equation (\ref{eq:3})
\STATE Calculate the truncating threshold $\epsilon$ using Equation (\ref{eq:4})
\FOR{epoch from 1 to $N$}
\STATE Sample an initial state $s_0$ from dataset $\mathcal{D}$
\FOR{$h$ in 1 to $H$}
\STATE Sample an action $a_h\sim\pi_\theta(\cdot|s_h)$
\STATE Randomly pick an ensemble member $\hat{P}$ from $\{\hat{P}_{\psi_i}\}_{i=1}^N$ and sample next state $s_{h+1}\sim\hat{P}(s_{h+1}|s_h,a_h)$
\STATE Calculate the sample uncertainty $u_h=\log\left(\left\|(s_h\oplus a_h\oplus s_{h+1})-(s_h\oplus a_h\oplus s_{h+1})^{k,N}\right\|_2+1\right)$ by FAISS
\IF{$u_h \le \epsilon$}
\STATE Add the synthetic transition $(s_h,a_h,r_h,s_{h+1})$ to the synthetic dataset $\mathcal{D}_{model}$
\STATE Sample a batch of transitions from $\mathcal{D}\cup\mathcal{D}_{model}$ and optimize the policy $\pi_\theta$ via the base RL algorithm such as SAC
\ELSE
\STATE break \qquad // Truncate the synthetic trajectory
\ENDIF
\ENDFOR
\ENDFOR
\end{algorithmic}
\end{algorithm}

\section{C. More Details on the Experimental Setup}

In this work, we primarily utilize MuJoCo datasets from D4RL to evaluate our method. Additionally, we also conduct experiments on Antmaze datasets. So we first introduce the two datasets used in our work.

\subsection{C.1. MuJoCo Datasets from D4RL}

\begin{figure}
    \centering
    \includegraphics[width=0.3\linewidth]{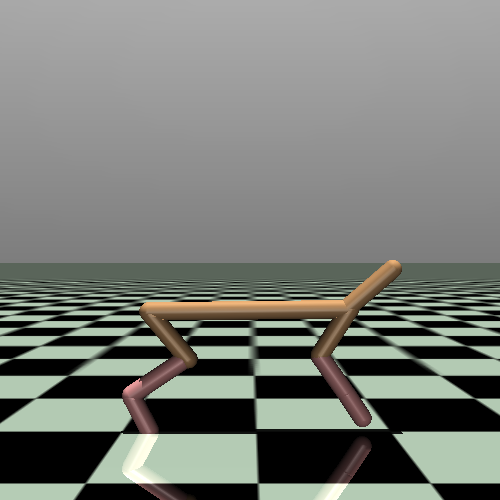}
    \includegraphics[width=0.3\linewidth]{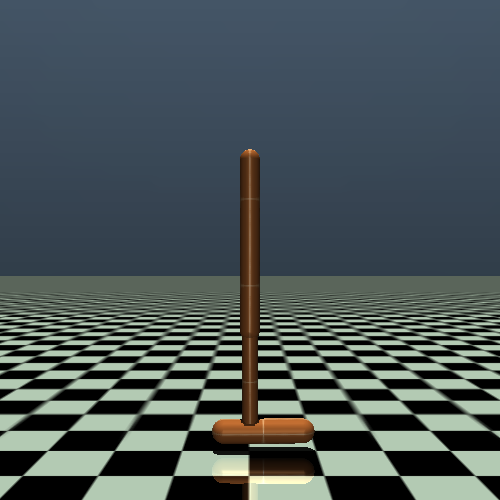}
    \includegraphics[width=0.3\linewidth]{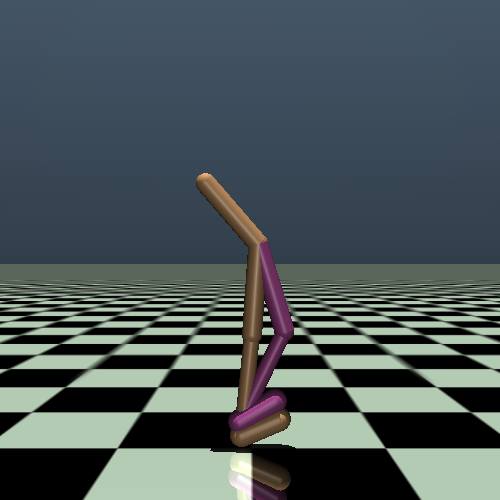}
    \caption{D4RL MuJoCo domains. From left to right, halfcheetah, hopper, walker2d.}
    \label{fig:mujocodataset}
\end{figure}

MuJoCo datasets from D4RL are collected from three tasks of MuJoCo: Halfcheetah, Hopper and Walker2d, as illustrated in Figure~\ref{fig:mujocodataset}. The quality of the datasets is categorized into the following types: \textit{random}: samples from a randomly initialized policy. \textit{medium}: 1M samples from an early-stopped SAC policy. \textit{medium-replay}: 1M samples gathered from the replay buffer of a policy trained up to the performance of the medium agent. \textit{medium-expert}: 50-50 split of medium-level data and expert-level data. \textit{expert}: 1M samples from a logged expert policy.

The metric we use to evaluate the agent's performance on MuJoCo datasets is the normalized score (NS). For a given MuJoCo dataset, NS is computed as:

\begin{equation*}
    NS=\frac{J_{\pi}-J_{random}}{J_{expert}-J_{random}}\times 100.
\end{equation*}
where $J_{\pi}$ is the performance of the policy under evaluation, $J_{random}$ is the performance of the random policy, and $J_{expert}$ is the performance of the expert policy.

\subsection{C.2. Antmaze Datasets from D4RL}

The Antmaze domain is a navigation task requiring an 8-DOF "Ant" quadraped robot to reach a goal location. Due to the sparse reward setting of Antmaze, it is usually more challenging than MuJoCo domain. The Antmaze datasets contain three maze layouts: "umaze", "medium" and "large", as shown in Figure~\ref{fig:antmaze}. The datasets are collected in three flavors: (1) the robot is commanded to reach a specified goal from a fixed start point (antmaze-umaze-v0). (2) the robot is commanded to reach a random goal from a random start point (the "diverse" datasets). (3) the robot is commanded to reach specific locations from a different set of specific start locations (the "play" datasets). In our experiment, we use the six datasets to evaluate our method: \textit{antmaze-umaze-v0}, \textit{antmaze-umaze-diverse-v0}, \textit{antmaze-medium-diverse-v0}, \textit{antmaze-medium-play-v0}, \textit{antmaze-large-diverse-v0}, \textit{antmaze-large-play-v0}. When evaluating the performance of a policy on the Antmaze datasets, we use the success rate of reaching the goal as the metric.

\begin{figure}[ht]
    \centering
    \includegraphics[width=\linewidth]{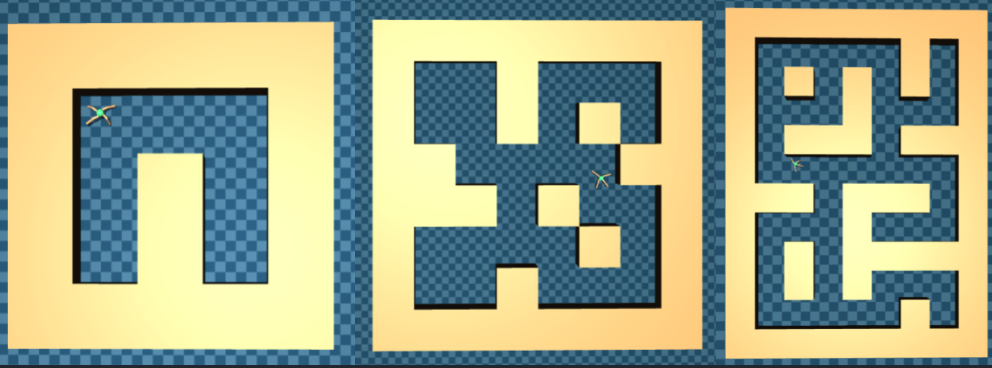}
    \caption{D4RL Antmaze domains. From left to right, umaze, medium, large.}
    \label{fig:antmaze}
\end{figure}

\subsection{C.3. Implementation Details}
\label{sec:implementationdetails}
We introduce the implementation of the baseline algorithms used in our work. For MOPO, we use the codebase\footnote{https://github.com/junming-yang/mopo.git} re-implemented in Pytorch. For MOReL, we use the author-provided\footnote{https://github.com/aravindr93/mjrl.git} implementation. For MOBILE\footnote{https://github.com/yihaosun1124/mobile} and RAMBO\footnote{https://github.com/marc-rigter/rambo.git}, we use the official implementation. For COMBO, we build the code based on MOPO. For the implementation of AMOReL and SUMO, we have included the code in the supplementary materials.

We then introduce the hyperparameter setup in our experiments. We present the main hyperparameter setup for AMOReL+SUMO and MOPO+SUMO in Table~\ref{tab:6}. For other baseline algorithms, we use the default hyperparameters suggested in original papers. 

\begin{table}[]
    \centering
    \caption{Hyperparameter setup for AMOReL+SUMO and MOPO+SUMO.}
    \begin{tabular}{lr}
    \toprule
    \textbf{Hyperparameter} & \textbf{Value} \\
    \midrule
     \textbf{SUMO} \\
    \;$k$ & 1 \\
    \;Search vector & $s\oplus a\oplus s^\prime$ \\
    \;Distance measure & Euclidean distance \\
    \midrule
    \textbf{AMOReL} \\
    \;Model ensemble size & 7 \\
    \;Model hidden layer & $(400,400,400,400)$ \\
    \;Actor and Critic hidden layer & $(256,256,256)$ \\
    \;Batch size & 256 \\
    \;Optimizer & Adam~\citep{kingma2014adam} \\
    \;Actor learning rate & $3\times 10^{-4}$ \\
    \;Critic learning rate & $3\times 10^{-4}$ \\
    \;$\alpha$ & 5 \\
    \;$\eta$ & 0.9 \\
    \midrule
    \textbf{MOPO} \\
    \;Model ensemble size & 7\\
    \;Model hidden layer & $(400,400,400,400)$ \\
    \;Actor and Critic hidden layer & $(256,256,256)$\\
    \;Batch size & 256 \\
    \;Optimizer & Adam \\
    \;Actor learning rate & $3\times 10^{-4}$\\
    \;Critic learning rate & $3\times 10^{-4}$\\
    \;$\lambda$ & 1 \\
    \;$\eta$ & 0.9 \\
    \bottomrule
    \end{tabular}

    \label{tab:6}
\end{table}

\subsection{C.4. Compute Infrastructure}
We list our hardware specifications as follows:
\begin{itemize}
    \item GPU: NVIDIA RTX 3090 ($\times$8)
    \item CPU: AMD EPYC 7452
\end{itemize}

\noindent We also list our software specifications as follows:
\begin{itemize}
    \item Python: 3.8.18
    \item Pytorch: 1.12.1+cu113
    \item Gym: 0.22.0
    \item MuJoCo: 2.0
    \item D4RL: 1.1
\end{itemize}

\begin{table}[]
    \centering
    \caption{Average training time comparison between base algorithms and their version with SUMO added on 15 MuJoCo datasets. The training steps are set to 1M.}
    \begin{tabular}{c|c|c}
    \toprule
        & MOPO & AMOReL \\
        \midrule
        Base & 7h23m & 8h46m \\
        +SUMO & \cellcolor{green!25}\textbf{6h41m} & \cellcolor{green!25}\textbf{8h03m} \\
    \bottomrule
    \end{tabular}
    \label{tab:timecost}
\end{table}

\subsection{C.5. Time Cost}
Given that SUMO involves KNN search which may be time consuming in large and high-dimensional datasets, it is essential to examine the time cost incurred by SUMO compared to model ensemble-based methods. We conduct experiments on all 15 MuJoCo datasets and compare the average training time cost of MOPO+SUMO and MOReL+SUMO against their base algorithms, with a total of 1M training steps. The results are shown in Table~\ref{tab:timecost}. The results indicate that SUMO does not bring more time cost than model ensemble-based methods, and even less, thanks to the efficient implementation of FAISS.

\section{D. More Experimental Results}

\subsection{D.1. Experimental Results on Antmaze Datasets}

\begin{table}
    \caption{Comparison of MOPO+SUMO and AMOReL+SUMO against MOPO and AMOReL on Antmaze "-v0" domains. We run each algorithm for 1M gradient steps with 5 random seeds.}
    \centering
    \footnotesize
    \begin{tabular}{c|cc|cc}
    \toprule
    \multicolumn{1}{c}{\multirow{2}{*}{\textbf{Task Name}}} & \multicolumn{2}{c}{MOPO} & \multicolumn{2}{c}{AMOReL}\\ 
    \cline{2-5} 
    \multicolumn{1}{c}{} & \multicolumn{1}{c}{SUMO} & \multicolumn{1}{c}{Base} & \multicolumn{1}{c}{SUMO} & \multicolumn{1}{c}{Base} \\
    \midrule
    Umaze & \cellcolor{green!25}\textbf{23.0} & 0.0 & \cellcolor{green!25}\textbf{12.2} & 0.0 \\
    Umaze-Diverse & 0.0 & 0.0 & 0.0 & 0.0 \\
    Medium-Play & \cellcolor{green!25}\textbf{13.4} & 0.0 & 0.0 & 0.0 \\
    Medium-Diverse & \cellcolor{green!25}\textbf{19.0} & 0.0 & \cellcolor{green!25}\textbf{7.7} & 0.0\\
    Large-Play & 0.0 & 0.0 & 0.0 & 0.0 \\
    Large-Diverse & 0.0 & 0.0 & 0.0 & 0.0 \\
    \midrule
    \textbf{Average Score} & \cellcolor{green!25}\textbf{9.2} & 0.0 & \cellcolor{green!25}\textbf{3.3} & 0.0 \\
    \bottomrule
    \end{tabular}
    \label{tab:antmaze}
\end{table}

Compared with MuJoCo datasets used and evaluated in the main text, Antmaze datasets from D4RL benchmark are much more challenging for model-based offline RL algorithms. To examine whether SUMO can also benefit base algorithms in Antmaze tasks, we conduct extensive experiments on top of MOPO and AMOReL on 6 Antmaze datasets: \textit{antmaze-umaze}, \textit{antmaze-umaze-diverse}, \textit{antmaze-medium-diverse}, \textit{antmaze-medium-play}, \textit{antmaze-large-diverse}, \textit{antmaze-large-play}, and compare the final performance between base algorithms and with SUMO added. The version of Antmaze datasets we use is v0. 

The experimental results are presented in Table~\ref{tab:antmaze}. It is clear that MOPO and AMOReL both struggle in all Antmaze tasks, achieving a score of zero. After integrating with SUMO, the performance of MOPO and AMOReL on some datasets has raised, indicating the agent learns a meaningful policy. This shows the efficacy of SUMO in challenging Antmaze domains.

\subsection{D.2. More Comparison with Other Uncertainty Estimation Methods}

In the main text, we show the superiority of SUMO over common model ensemble-based methods. In this part, we further compare SUMO with other uncertainty estimation methods for model-based offline RL from more recent literatures. We list our baselines as follows:

\noindent \textbf{Count-based Estimation (CE).} The method used by~\cite{kim2023model}, which utilizes the count estimates of state-action pairs to quantify the model uncertainty.

\noindent \textbf{Riemannian Pullback Metric (RPM).} The method used by~\cite{tennenholtz2022uncertainty}. It estimates the sample uncertainty by computing its KNN distance to the data in the dataset using the Riemannian pullback metric.

We follow the experimental setup in Section \ref{sec:experiment} of the main text, and compare SUMO with CE and RPM on D4RL datasets, using $\rho$ and $r$ as evaluating metrics. The comparison results are listed in Table~\ref{tab:8}. The results reveal that SUMO is comparable to CE and RPM in detecting OOD samples. However, using CE or RPM will bring heavy computational burden and cumbersome algorithmic procedures.

\begin{table}
\caption{Spearman rank ($\rho$) and Pearson bivariate ($r$) correlations comparison of SUMO with CE and RPM. We run each experiment by 5 different seeds and report the average $\rho$ and $r$.}
\label{tab:8}
    \centering
\begin{tabular}{l | cc | cc | cc}
\toprule
\multicolumn{1}{c}{\multirow{2}{*}{\textbf{Task Name}}} &
  \multicolumn{2}{c}{CE} &
  \multicolumn{2}{c}{RPM} & 
  \multicolumn{2}{c}{SUMO} 
  \\ \cline{2-7} 
\multicolumn{1}{c}{} &
  $\mathbf{\rho}$ &
  $r$ &
  $\mathbf{\rho}$ &
  $r$ &
  $\mathbf{\rho}$ &
  $r$ \\ \hline
  half-m & \cellcolor{green!25}\textbf{0.87} & \cellcolor{yellow!50}\textbf{0.79} & 0.80 & 0.74 & 0.84 & 0.77 \\
  hopper-m & 0.81 & 0.72 & 0.83 & 0.78 & \cellcolor{green!25}\textbf{0.86} & \cellcolor{yellow!50}\textbf{0.79} \\
  walker2d-m & \cellcolor{green!25}\textbf{0.86} & \cellcolor{yellow!50}\textbf{0.77} & 0.79 & 0.76 & 0.82 & 0.71 \\
  half-m-e & 0.85 & 0.74 & 0.76 & 0.70 & \cellcolor{green!25}\textbf{0.88} & \cellcolor{yellow!50}\textbf{0.81} \\
  hopper-m-e & \cellcolor{green!25}\textbf{0.77} & 0.63 & 0.70 & 0.65 & 0.75 & \cellcolor{yellow!50}\textbf{0.68} \\
  walker2d-m-e & 0.83 & \cellcolor{yellow!50}\textbf{0.81} & 0.77 & 0.72 & \cellcolor{green!25}\textbf{0.84} & 0.80 \\ \hline
  \textbf{Mean} & \cellcolor{green!25}\textbf{0.83} & 0.74 & 0.78 & 0.73 & \cellcolor{green!25}\textbf{0.83} & \cellcolor{yellow!50}\textbf{0.76} \\
  \bottomrule
  
\end{tabular}

\end{table}

\subsection{D.3. More Ablation Study Results}

\begin{table}[]
    \centering
    \caption{Normalized average score comparison of MOPO+SUMO and AMOReL+SUMO w/ and w/o the reward dimension in the search vector. w/ $r$ means including the reward dimension, and w/o $r$ means excluding the reward dimension. Each experiment is run with 5 random seeds.}
    \begin{tabular}{c|cc|cc}
    \toprule
    \multicolumn{1}{c}{\multirow{2}{*}{\textbf{Task Name}}} & \multicolumn{2}{c}{MOPO+SUMO} & \multicolumn{2}{c}{AMOReL+SUMO} \\
    \cline{2-5}
     & w/ $r$ & w/o $r$ & w/ $r$ & w/o $r$ \\
     \midrule
     half-r & 37.1 & 37.2 & 43.9 & 44.2 \\
     hopper-r & 24.9 & 24.5 & 32.2 & 32.4 \\
     walker2d-r & 13.3 & 13.1 & 19.8 & 21.0 \\
     half-m-r & 73.2 & 73.1 & 76.7 & 76.8 \\
     hopper-m-r & 64.2 & 65.4 & 88.1 & 88.7 \\
     walker2d-m-r & 68.9 & 70.3 & 67.8 & 65.3 \\
     half-m & 66.2 & 68.9 & 84.2 & 82.1 \\
     hopper-m & 76.7 & 74.6 & 94.3 & 95.0 \\
     walker2d-m & 56.9 & 57.3 & 66.3 & 67.4 \\
     half-m-e & 84.3 & 84.1 & 99.6 & 99.4 \\
     hopper-m-e & 88.5 & 88.1 & 100.7 & 101.5 \\
     walker2d-m-e & 82.2 & 81.9 & 113.4 & 112.3 \\
     half-e & 87.8 & 87.1 & 110.2 & 112.3 \\
     hopper-e & 101.9 & 101.2 & 106.5 & 105.4 \\
     walker2d-e & 115.2 & 114.4 & 105.9 & 106.3 \\
     \midrule
     \textbf{Average Score} & 69.4 & 69.4 & 80.6 & 80.7\\
     \bottomrule
    \end{tabular}

    \label{tab:includer}
\end{table}

In this part, we supplement the ablation study results over whether including the reward dimension into the search vector affects the performance of SUMO. We choose $(s\oplus a\oplus s^\prime)$ as the base search vector and examine whether including the reward into the search vector, i.e., $(s\oplus a\oplus r\oplus s^\prime)$ can incur a distinct performance for SUMO.

The base algorithms we choose are MOPO+SUMO and AMOReL+SUMO, we use $(s\oplus a\oplus s^\prime)$ and $(s\oplus a\oplus r\oplus s^\prime)$ as the search vector and conduct experiments on D4RL MuJoCo datasets. The results are shown in Table~\ref{tab:includer}. We can see almost no difference on algorithm performance between these two search vectors, indicating that whether including the reward dimension for KNN search has alomost no effect on the efficacy of SUMO. Therefore, we can simply drop the reward dimension for KNN search.

\subsection{D.4. More Parameter Study Results}
In the main text, we conduct the parameter study for $k$ value of SUMO. In this part, we provide more parameter study results for AMOReL+SUMO and MOPO+SUMO.

For AMOReL+SUMO, we focus on the threshold coefficient $\alpha$. For MOPO+SUMO, we focus on the penalty coefficient $\lambda$ and the sampling coefficient $\eta$. We conduct experiments on MuJoCo datasets from D4RL.

\noindent For AMOReL+SUMO:

\begin{itemize}

    \item \textbf{Sampling coefficient $\eta$:} $\eta$ determines the proportion of dataset samples used for training. A larger $\eta$ means more real samples for training, while a smaller $\eta$ means more synthetic samples for training. We vary the value of $\eta$ to $\{0.05,0.5,0.9\}$, and conduct experiments on \textit{halfcheetah-medium-expert-v2}. The results are shown in Figure~\ref{fig:4} Left. We find that a better performance is achieved when using more real samples for training, for example, $\eta=0.9$. Therefore, in this work, we set $\eta$ to 0.9.
    \item \textbf{Threshold coefficient $\alpha$}: $\alpha$ controls the size of the trajectory truncation threshold. A larger $\alpha$ results in a larger threshold, allowing more synthetic samples to be used for training. A smaller $\alpha$ leads to a smaller threshold, allowing fewer synthetic samples for training. We use \textit{halfcheetah-medium-expert} for experiments, with different $\alpha\in\{1,5,10\}$. The results are shown in Figure~\ref{fig:4} Right. We find that both too large and too small values for $\alpha$ do not perform well. An appropriate value for $\alpha$, such as $\alpha=5$, leads to better performance.

\end{itemize}

\begin{figure}[htb]

    \centering
    \includegraphics[width=1\linewidth]{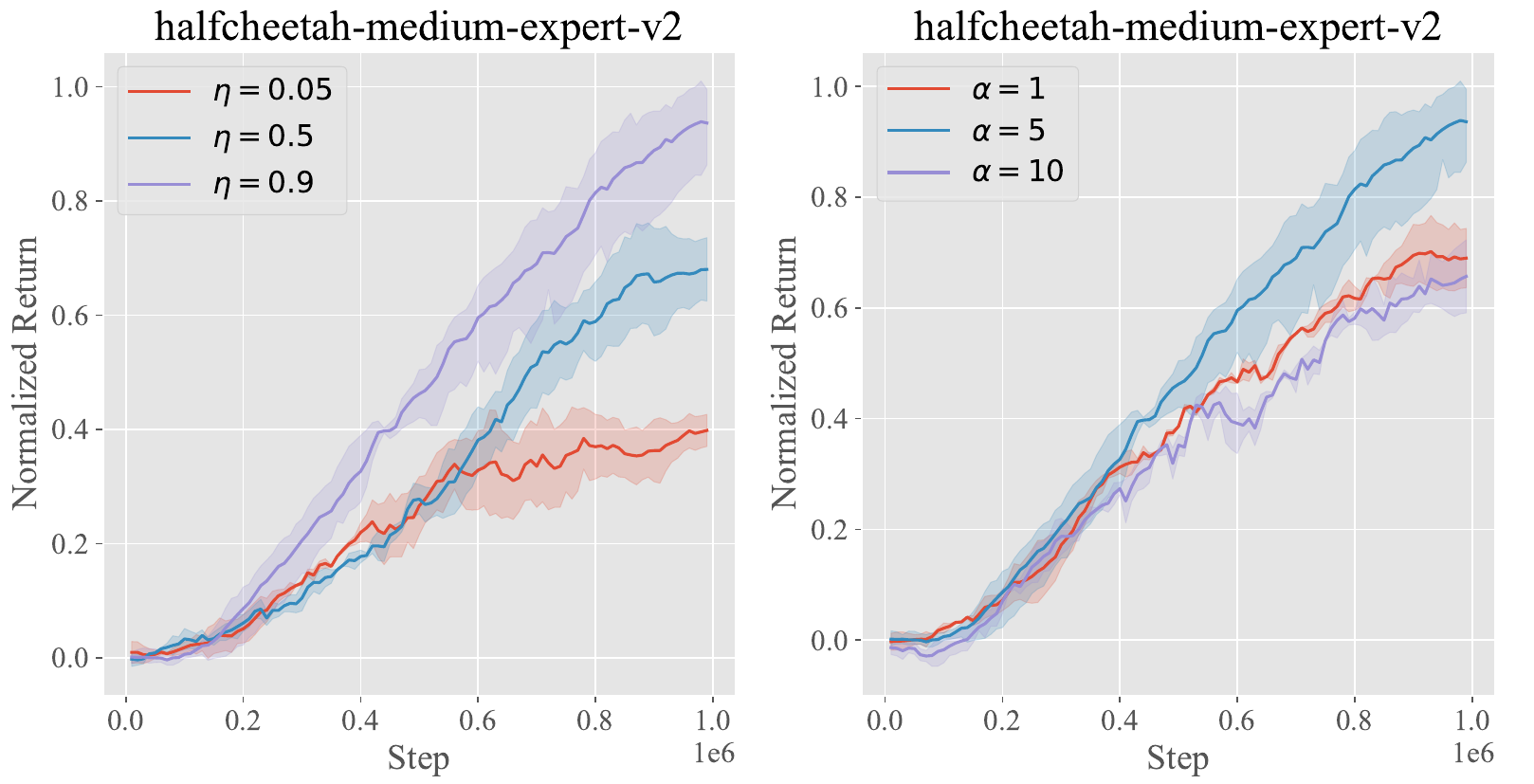}
    \caption{Parameter study on $\alpha$ of AMOReL+SUMO. We run each experiment with 5 random seeds. The shaded region denotes the standard deviation.} 
    \vspace{-0.2cm}
    \label{fig:4}
\end{figure}

\begin{figure}[htb]

    \centering
    \includegraphics[width=1\linewidth]{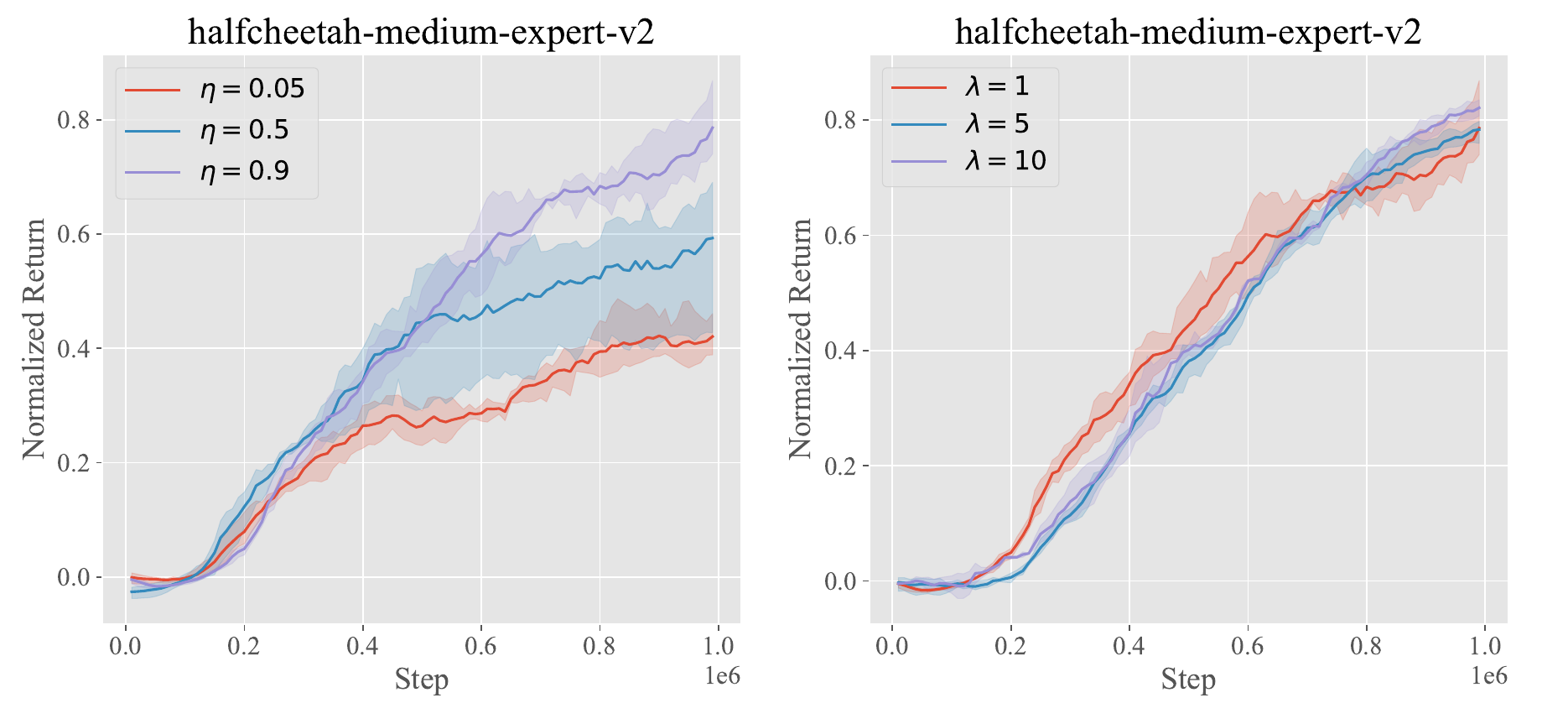}
    \caption{\textbf{Left:} Parameter study on $\eta$ of MOPO+SUMO. \textbf{Right:} Parameter study on $\lambda$ of MOPO+SUMO. We run each experiment with 5 random seeds and the shaded region captures the standard deviation.}
    \vspace{-0.2cm}
    \label{fig:5}
\end{figure}

\noindent For MOPO+SUMO:
\begin{itemize}
    \item \textbf{Sampling coefficient $\eta$:} Similar to AMOREL+SUMO, $\eta$ controls the proportion of real samples for training. We conduct experiments on \textit{halfcheetah-medium-expert-v2}, with different $\eta\in\{0.05,0.5,0.9\}$, and present the experimental results in Figure~\ref{fig:5} Left. We find that using larger $\eta$ leads to a better performance.
    \item \textbf{Penalty coefficient $\lambda$:} $\lambda$ determines the degree of reward penalty. A larger $\lambda$ means a greater penalty for synthetic samples. We vary $\lambda$ to $\{1,5,10\}$, and conduct experiments on \textit{halfcheetah-medium-expert-v2}. The results are shown in Figure~\ref{fig:5} Right. We find that MOPO-SUMO is robust to the value of $\lambda$. Therefore, in this work, we set $\lambda=1$.
\end{itemize}

\subsection{D.5. Full Comparison with More Baselines}
In the main text, we have compared the performance of base algorithms with and without applying SUMO for uncertainty estimation. These base algorithms (MOPO, AMOReL, MOReL and MOBILE) all need to estimate and utilize uncertainty. In this part, we add more baselines which discard uncertainty estimation for comparison. We choose COMBO and RAMBO as baselines and conduct experiments on 15 D4RL MuJoCo datasets. The full experimental results are listed in Table~\ref{tab:fullcomparison}. We can see SUMO can bring performance bonus to base algorithms, surpassing the compared baselines. Especially, the average score of AMOReL is lower than that of COMBO and RAMBO, but AMOReL+SUMO shows a superior performance to COMBO and RAMBO.

\begin{sidewaystable*}
\caption{Normalized average score comparison of different base algorithms with and without SUMO, as well as other model-based offline algorithms including COMBO and RAMBO on 15 D4RL MuJoCo datasets, and the version of datasets we use is "-v2". We abbreviate "halfcheetah" as "half", "random" as "r", "medium" as "m", "medium-replay" as "m-r", "medium-expert" as "m-e" and "expert" as "e". We run each algorithm for 1M gradient steps with 5 random seeds. We report the final average performance and $\pm$ captures the standard deviation. Bold numbers with a green background represent the best average scores within each group.}
\label{tab:fullcomparison}
\centering
\begin{tabular}{c|cc|cc|cc|cc|cccc}
\toprule
\multicolumn{1}{c}{\multirow{2}{*}{\textbf{Task Name}}}  & \multicolumn{2}{c}{MOPO}  & \multicolumn{2}{c}{AMOReL} & \multicolumn{2}{c}{MOReL} & \multicolumn{2}{c}{MOBILE} & \multicolumn{1}{c}{\multirow{2}{*}{COMBO}} & \multicolumn{1}{c}{\multirow{2}{*}{RAMBO}}\\ 
\cline{2-9}

 & SUMO & Base & SUMO & Base & SUMO & Base & SUMO & Base & & \\
\midrule

half-r  & \cellcolor{green!25} \textbf{37.2}$\pm$1.9  & 34.9$\pm$1.4 & \cellcolor{green!25} \textbf{44.2}$\pm$2.1 & 31.8$\pm$2.4 & \cellcolor{green!25}\textbf{37.3$\pm$2.1} & 29.8$\pm$1.2 & 34.9$\pm$2.1 & \cellcolor{green!25}\textbf{37.8$\pm$2.9} & 34.3$\pm$1.9 & 39.7$\pm$1.9\\
hopper-r  & \cellcolor{green!25} \textbf{24.5}$\pm$0.9  & 19.4$\pm$0.7 & 29.7$\pm$0.6 & \cellcolor{green!25} \textbf{32.4$\pm1.2$} & \cellcolor{green!25}\textbf{33.2$\pm$0.7} & 30.1$\pm$1.0  & 30.8$\pm$0.9 & \cellcolor{green!25}\textbf{32.6 $\pm$ 1.2} & 25.6$\pm$0.8 & 21.6$\pm$6.9\\
walker2d-r           & 11.4$\pm$1.3  & \cellcolor{green!25} \textbf{13.1$\pm$1.1}  &    20.3$\pm$0.2          & \cellcolor{green!25} \textbf{21.0$\pm$0.3} & 17.8$\pm$0.6 & \cellcolor{green!25}\textbf{19.4$\pm$0.3} & \cellcolor{green!25}\textbf{27.9$\pm$2.0} & 16.3$\pm$ 4.6 & 11.3$\pm$0.8 & 13.4$\pm$5.4\\
half-m-r & \cellcolor{green!25} \textbf{73.1}$\pm$2.1  & 65.0$\pm$3.3 & \cellcolor{green!25} \textbf{76.8}$\pm$2.7   & 49.6$\pm$2.3 & \cellcolor{green!25}\textbf{67.9$\pm$2.5} &51.2$\pm$1.9  & \cellcolor{green!25}\textbf{76.2$\pm$1.3} & 67.9$\pm$2.0 & 51.0$\pm$2.3 & 67.1$\pm$1.2\\
hopper-m-r      & \cellcolor{green!25} \textbf{65.4}$\pm$3.2 & 38.8$\pm$2.4 & \cellcolor{green!25} \textbf{88.7}$\pm$1.3   & 80.5$\pm$1.0 & \cellcolor{green!25}\textbf{83.9$\pm$1.3} &76.3$\pm$1.0 & \cellcolor{green!25}\textbf{109.9$\pm$1.4} & 104.9$\pm$0.9  & 82.2$\pm$2.1 & 92.6$\pm$4.5\\
walker2d-m-r    & 70.3$\pm$0.6  & \cellcolor{green!25} \textbf{74.8$\pm$1.5} & \cellcolor{green!25} \textbf{65.3}$\pm$2.7   & 46.0$\pm$1.9 & \cellcolor{green!25}\textbf{61.3$\pm$3.1} &48.1$\pm$4.2 & 78.2$\pm$1.5 & \cellcolor{green!25}\textbf{83.9$\pm$1.3}  & 83.8$\pm$1.1 & 83.4$\pm$9.7\\
half-m        & 68.9$\pm$2.3  & \cellcolor{green!25} \textbf{73.1$\pm$2.7} & \cellcolor{green!25} \textbf{82.1}$\pm$2.8   & 69.2$\pm$1.2 & 57.9$\pm$1.2 &\cellcolor{green!25}\textbf{62.4$\pm$1.3} & \cellcolor{green!25}\textbf{84.3$\pm$2.4} & 75.1$\pm$1.5 & 59.5$\pm$0.9 & 75.1$\pm$0.8\\
hopper-m  & \cellcolor{green!25} \textbf{74.6}$\pm$1.9 & 45.6$\pm$2.5 & \cellcolor{green!25} \textbf{95.0}$\pm$2.1  & 87.2$\pm$3.4 & 82.1$\pm$1.4 &\cellcolor{green!25}\textbf{84.7$\pm$3.1} & \cellcolor{green!25}\textbf{104.8$\pm$2.1} & 102.9$\pm$1.9  & 83.6$\pm$2.4 & 92.4$\pm$6.7\\
walker2d-m           & \cellcolor{green!25} \textbf{57.3}$\pm$1.6  & 42.3$\pm$0.8 & 67.4$\pm$0.9   & \cellcolor{green!25} \textbf{71.2$\pm$1.3}& \cellcolor{green!25}\textbf{77.1$\pm$3.5} &67.6$\pm$2.2 & \cellcolor{green!25}\textbf{94.1$\pm$2.5} & 89.1$\pm$1.0   & 89.8$\pm$1.8& 87.7$\pm$2.1\\
half-m-e & \cellcolor{green!25} \textbf{84.1}$\pm$1.4  & 76.6$\pm$1.0 & \cellcolor{green!25} \textbf{99.4}$\pm$3.6 & 90.6$\pm$2.1 & \cellcolor{green!25}\textbf{98.6$\pm$3.5} & 92.3$\pm$4.6 & 106.6$\pm$2.4 & \cellcolor{green!25}\textbf{109.2$\pm$3.8}  & 84.0$\pm$1.2 & 92.1$\pm$3.5\\
hopper-m-e  & \cellcolor{green!25} \textbf{88.1}$\pm$1.9 & 69.1$\pm$1.2 & 101.5$\pm$0.4 & \cellcolor{green!25} \textbf{106.2$\pm$1.5} & \cellcolor{green!25}\textbf{105.8$\pm$1.4} &102.4$\pm$0.9 & 107.8$\pm$0.7 & \cellcolor{green!25}\textbf{110.1$\pm$1.3}  & 104.6$\pm$3.4 & 83.1$\pm$7.8\\
walker2d-m-e    &\cellcolor{green!25}  \textbf{81.9}$\pm$1.6 & 75.4$\pm$1.1 & \cellcolor{green!25} \textbf{109.6}$\pm$0.7  & 92.3$\pm$0.9 & 86.1$\pm$1.8 & \cellcolor{green!25}\textbf{90.4$\pm$1.4} & \cellcolor{green!25}\textbf{122.8$\pm$0.4} & 115.9$\pm$0.8  & 98.1$\pm$0.4 & 62.1$\pm$13.4\\
half-e & 87.1$\pm$1.2 & \cellcolor{green!25} \textbf{88.7$\pm$1.6} & \cellcolor{green!25} \textbf{112.3}$\pm$2.5 & 103.2$\pm$1.9 & \cellcolor{green!25}\textbf{109.9$\pm$2.1} &105.8$\pm$1.6 & 111.5$\pm$1.5 & \cellcolor{green!25}\textbf{113.1$\pm$2.1} & 100.4$\pm$0.2 & 107.5 $\pm$ 11.6 \\
hopper-e & \cellcolor{green!25} \textbf{101.2}$\pm$ 1.8 & 83.9$\pm$0.7 & \cellcolor{green!25} \textbf{105.4}$\pm$0.6 & 94.5$\pm$0.3 & \cellcolor{green!25}\textbf{101.8$\pm$1.9} &92.5$\pm$1.0 & \cellcolor{green!25}\textbf{115.9 $\pm$ 2.9} & 112.4$\pm$3.5  & 111.4$\pm$1.5 & 114.2$\pm$17.9\\
walker2d-e & \cellcolor{green!25} \textbf{114.4}$\pm$1.1 & 95.3$\pm$3.4 & 106.3$\pm$1.3 & \cellcolor{green!25} \textbf{107.2$\pm$1.0} & 106.2$\pm$1.5 &\cellcolor{green!25}\textbf{108.3$\pm$2.1} & \cellcolor{green!25}\textbf{116.3$\pm$1.5} & 113.7$\pm$1.1  & 108.2$\pm$2.0 & 102.3$\pm$3.1\\
\midrule
\textbf{Average score} & \cellcolor{green!25} \textbf{69.3} & 59.7 & \cellcolor{green!25} \textbf{80.3} & 72.2 &
\cellcolor{green!25}\textbf{75.1} &70.7 & \cellcolor{green!25}\textbf{88.2} & 85.6 & 73.7 & 75.6\\
\bottomrule
\end{tabular}
\end{sidewaystable*}

\end{document}